\newcommand{\renata}[1]{\textcolor{black}{#1}}
\newcommand{\comment}[1]{}
\newcommand{\df}{\mathop{=}\limits_{df}}
\newcommand{\mF}{\mathcal{F}}
\newcommand{\hmF}{\hat{\mathcal{F}}}
\newcommand{\norm}[1]{\left|#1\right|}
\begin{document}
	%
	\title{\renata{Isometric Transformation Invariant Graph-based Deep Neural Network}}	
	%
	%
	%
	%
	
	\author{Renata~Khasanova 
		and~Pascal~Frossard
		\IEEEcompsocitemizethanks{\IEEEcompsocthanksitem R. Khasanova and P. Frossard are with the Department
			of Electrical and Computer Engineering, EPFL, Lausanne,
			Switzerland, 1015.\protect\\
			E-mail: renata.khasanova@epfl.ch
		}
	}

	\IEEEtitleabstractindextext{%
		\begin{abstract}
			Learning transformation invariant representations of visual data is an important problem in computer vision. Deep convolutional networks have demonstrated remarkable results for image and video classification tasks. However,  they have achieved only limited success in the classification of images that undergo geometric transformations. In this work we present a novel Transformation Invariant Graph-based Network (TIGraNet), which learns graph-based features that are inherently invariant to isometric transformations such as rotation and translation of input images. In particular, images are represented as signals on graphs, which permits to replace classical convolution and pooling layers in deep networks with graph spectral convolution and dynamic graph pooling layers that together contribute to invariance to isometric transformation. Our experiments show high performance on rotated and translated images from the test set compared to classical architectures that are very sensitive to transformations in the data. The inherent invariance properties of our framework provide key advantages, such as increased resiliency to data variability and sustained performance with limited training sets. Our code is available online.
		\end{abstract}
		
		\begin{IEEEkeywords}
			Deep learning, computer vision, graph signal processing, invariant feature learning.
	\end{IEEEkeywords}}

	\maketitle

	\IEEEdisplaynontitleabstractindextext

	%
	\IEEEpeerreviewmaketitle

	\IEEEraisesectionheading{\section{Introduction}\label{sec:introduction}}
	
	\IEEEPARstart{D}{eep} convolutional networks (ConvNets) have achieved impressive results for various computer vision tasks, such as image classification~\cite{bb:krizhevsky2012imagenetNIPS2012} and segmentation~\cite{bb:seg}. However, they still suffer from the potentially high variability of data in high-dimensional image spaces. In particular, ConvNets that are trained to recognize an object from a given perspective or camera viewpoint, will likely fail when the viewpoint is changed or the image of the object is simply rotated. In order to overcome this issue the most natural step is to extend the training dataset with images of the same objects but seen from different perspectives. This however increases the complexity of data collection and more importantly leads to the growth of the training dataset when the variability of the data is high.
	
	Instead of simply augmenting the training set, which may not always be feasible, one can try to solve the aforementioned problem by making the classification architecture invariant to transformations of the input signal as illustrated in Fig.~\ref{fig:adv}. In that perspective, we propose to represent input images as signals on the grid graph instead of simple matrices of pixel intensities. The benefits of this representation is that graph signals do not carry a strict notion of orientation, while at the same time, signals on a grid graph stay invariant to translation. We exploit these properties to create features that are invariant to isometric transformations and we design new graph-based convolutional and pooling layers, which replace their counterparts used in the classical deep learning settings. This permits preserving the transformation equivariance of each intermediate feature representation under both translation and rotation of the input signals. Specifically, our convolutional layer relies on filters that are polynomials of the graph Laplacian for effective signal representation without computing eigendecompositions of the graph signals. We further introduce a new statistical layer that is placed right before the first fully-connected layer of the network prior to the classification. This layer is specific to our graph signal representation, and in turn permits combining the rotation and translation invariance features along with the power of fully-connected layers that are essential for solving the classification task. 
	\begin{figure}
		\centering
		\begin{tabular}{cc}
			\raisebox{0.2cm}{\rotatebox{90}{Conv}} &
			\includegraphics[width=0.8\linewidth]{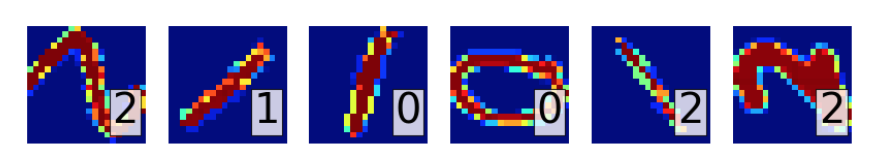} \\
			\raisebox{0.2cm}{\rotatebox{90}{STN}} &
			\includegraphics[width=0.8\linewidth]{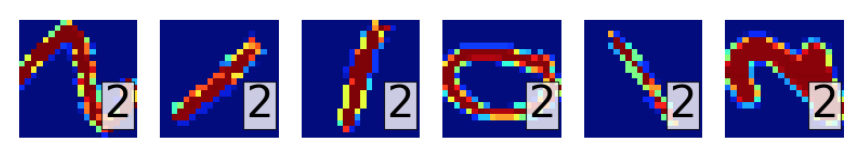} \\
			\raisebox{0.cm}{\rotatebox{90}{TIGraNet}} &
			\includegraphics[width=0.8\linewidth]{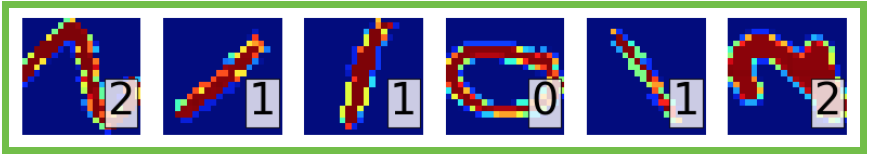} \\
		\end{tabular}
		\caption{Illustrative transformation-invariant handwritten digit classification task. Rotated test images, along with their classification label obtained from ConvNets (Conv)~\cite{bb:lecun}, Spatial-Transformer Network (STN)~\cite{bb:STN}, and our method. (best seen in color)}
		\label{fig:adv}
	\end{figure}
	We finally design a complete architecture for a deep neural network called TIGraNet\footnote{The code is available online:~\text{https://github.com/LTS4/TIGraNet}}, which efficiently combines spectral convolutional, dynamic pooling, statistical and fully-connected layers to process images represented on grid graphs. We train our network in order to learn isometric transformation invariant features. These features are used in sample transformation-invariant image classification tasks, where our solution outperforms the state-of-the-art algorithms for handwritten digit recognition and classification of objects seen from different viewpoints.
	%
	%
	
	\section{Related work}
	\label{s:related_work}
	
	Most of the recent architectures~\cite{bb:lecun98gradient, bb:krizhevsky2012imagenetNIPS2012} have been very successful in processing natural images, but not necessarily in properly handling geometric transformations in the data. We describe below some of the recent attempts that have been proposed to construct transformation-invariant architectures. We further review quickly the recent works that extend deep learning data represented on graphs or networks.

	\subsection{Transformation-invariant deep learning}
	
	One intuitive way to make the classification architectures more robust to isometric transformations is to augment the training set with transformed data~\cite{bb:van2012art}, which however, increases both the training set and training time. Alternatively, there have been works~\cite{bb:fasel2006rotation, bb:Coors2018VISAPP, bb:dima} that incorporate data augmentation inside the deep learning framework and simultaneously process the original image and its transformed versions. While effective, this class of methods does not make the learned features truly transformation invariant, as the network cannot generalize to the image transformations that were not exposed to the network at training time.
	
	A different direction was taken by~\cite{bb:STN, bb:dai2018, bb:marcos2016learning}, who suggest to either learn the image transformation~\cite{bb:STN, bb:dai2018} or use the rotated filter banks~\cite{bb:marcos2016learning} to learn a latent feature representation for an image, independently of the transformation applied to it. The former methods, however, still require a lot of data augmentation to achieve their full potential, and the latter can be strictly invariant only to a limited number of image transformations.
	
	
	Another set of methods~\cite{bb:cohen2016group, bb:dieleman2015rotation, bb:dieleman-cyclic-2016} suggest various deep learning architectures that are either invariant to a small subset of transformations~\cite{bb:cohen2016group, bb:dieleman-cyclic-2016} or designed for a specific problem, such as galaxy morphology prediction~\cite{bb:dieleman2015rotation}. However, similarly to the previous methods, these approaches still need to be trained on a large dataset of randomly rotated images in order to be transformation invariant and achieve effective performance.
	
	
	Contrary to the previous methods, we build on~\cite{bb:KhasanovaF17} and propose to directly learn feature representations that are invariant to isometric data transformations. With such features, our architecture preserves all the advantages of deep networks, but additionally provides invariance to isometric geometric transformations. The methods in~\cite{bb:oyallon2015deep, bb:bruna2013invariant, bb:harm} are the closest in spirit to ours. In order to be invariant to local transformation, the works in~\cite{bb:oyallon2015deep, bb:bruna2013invariant} propose to replace the classical convolutional layers with wavelets, which are stable to some deformations. The latter achieves high performance on texture classification task, however it does not improve the performance of supervised ConvNets on natural images, due to the fact that the final feature representations are too rigid and unable to adapt to a specific task. 
	
	Further,~\cite{bb:rev1, bb:Cohen2018, bb:rev2} propose to use convolutional filters in Fourier domain to reduce complexity. The work~\cite{bb:rev2} introduces spectral pooling to truncate the representation in the frequency domain and~\cite{bb:Cohen2018} introduces a rotation-equivariant spherical cross-correlation layer that makes the network learn rotation-equivariant features. This method, however, is not transformation invariant and is tailored for a specific set of tasks, where signals are defined on a sphere. 
	
	Finally, work~\cite{bb:harm} proposes a so called Harmonic Network, which uses specifically designed complex valued filters to make feature representations equivariant to rotations. The work~\cite{bb:3dSteerableWeiler} then presents a 3D analog of~\cite{bb:harm}  and introduces 3D Steerable CNN that is equivariant to rigid body motions. These method, however, still requires the training dataset to contain examples of rotated images to achieve its full potential. On the other hand, we propose building features that are inherently invariant to isometric transformations, which allows us to train more compact networks and achieve state-of-the-art results.
	
	\renata{We extend our earlier work~\cite{bb:KhasanovaF17}, which introduces a novel graph-based deep learning architecture that learns isometric transformation invariant features. We theoretically show that the spectral convolutional filters defined in~\cite{bb:KhasanovaF17} are equivariant to graph isometric transformations of the input signal. We further introduce the quasi-equivariance property and show that the same filters are quasi-equivariant to a broader class of general isometric transformations. We finally support these theoretical results with additional experiments.}

	\subsection{Deep learning and graph signal processing}
	
	While there has been a lot of research efforts related to the application of deep learning methods to traditional data like 1-D speech signals or 2-D images, it is recently that researchers have started to consider the analysis of network or graph data with such architectures \cite{bb:kipf2016semi, bb:henaff2015deep, bb:nips_fingerprint, bb:Structural-RNN}.
	The work in \cite{bb:bruna-iclr-14} has been among the pioneering efforts in trying to bridge the gap between graph-based learning and deep learning methods. The authors calculate the projection of graph signals onto the space defined by the eigenvectors of the Laplacian matrix of the input graph, which itself describes the geometry of the data. It however requires an expensive calculation of the graph eigendecomposition, which can be a strong limitation for large graphs, as it requires $O(N^3)$ operations with $N$ being the number of nodes in the graph. The authors in \cite{bb:Mikhael} later propose an alternative to analyse network data, which is built on a vertex domain feature representation and on fast spectral convolutional filters. 
	
	Finally, the authors in work \cite{bb:MontiBMRSB17} propose a more general framework for convolutional deep networks operating in the non-Euclidean domains. However, these methods directly integrate the learned graph-based features into a fully-connected layer similarly to classical ConvNets, which makes the resulting feature representation dependent on the transformation of the input signal. Therefore in order to achieve transformation invariance such methods require all signal transformations to be exposed to the network at training time, which makes the training process time-consuming and inefficient.
	
	Some recent works further apply deep networks to particular graph data analysis tasks~\cite{bb:bronstein2017geometric}. For example, the authors in~\cite{bb:bronsteingeodesicconv} generalize the ConvNets paradigm to the extraction of feature descriptors for 3D shapes that are defined on different graphs. The work in \cite{bb:nips_fingerprint} further applies deep architectures to train descriptors of chemical molecules, which can be used to predict properties of novel molecules. In~\cite{bb:deepwalk}, the  authors introduce deep networks to analyze web-scale graphs using random walks, which can be used for social network classification tasks. The above algorithms are however specifically developed for a particular task, therefore their generalization to other problems is often difficult.
	
	To the best of our knowledge, these approaches to deep learning on graphs do not provide true transformation-invariance in image classification. At the same time, the methods that specifically target transformation invariance in image datasets mostly rely on data augmentation, which largely remains an art. We propose to bridge this gap and present a novel method that uses the power of graph signal processing to add translation and rotation invariance to the image feature representation learned by deep networks.
	

	\section{Graph signal processing elements}
	\label{s:gsp}
	
	We now briefly review some elements of graph signal processing that are important in the construction of our novel framework. We represent an input image as a signal $y(v_n)$ on the nodes $\{v_n\}$ of the grid graph $G$. In more details, $G=\{{\mathcal{V},\mathcal{E}}, A\}$ is an undirected, weighted and connected graph, where $\mathcal{V}$ is a set of $N$ vertices (i.e., the image pixels), $\mathcal{E}$ is a set of edges and $A$ is a weighted adjacency matrix. An edge $e(v_i,v_j)$ that connects two nodes $v_i$ and $v_j$ is associated with the weight $w(v_i, v_j)=w(v_j, v_i)$, which is usually chosen to capture the distance between both vertices. The edge weight is set to zero for pairs of nodes that are not connected, and all the edge weights together build the adjacency matrix $A$. Every vertex $v_n$ of $G$ carries the luminance value of the corresponding image pixel. Altogether, the valued vertices define a graph signal $y(v_n): \mathcal{V} \to \mathbb{R}$. 
	
	Similarly to regular 1-D or 2-D signals, the graph signals can be efficiently analysed via harmonic analysis and processed in the spectral domain~\cite{bb:shuman2013emerging}. In that respect, we first consider the normalized graph Laplacian operator of the graph $G$, defined as 
			\begin{equation}
	\mathcal{L} =
	\begin{cases}
	1\;,  & \mbox{if } i = j, \; d(v_i) \neq 0 \;, \\
	\big(d(v_i) d(v_j)\big)^{-1/2}\;, &  \mbox{if } i \neq j, \; v_i \sim v_j \;, \\
	0\;, &  \mbox{otherwise}\;, \\
	\end{cases}
	\label{eq:laplacian}
	\end{equation} 
	\noindent
	where $v_i \sim v_j$ denotes that node $v_i$ is adjacent to the node $v_j$, and $d(v_i)$ is the degree of the vertex $v_i$, computed as 
	\begin{equation}
	d(v_i) = \sum_{j=0, j \neq i}^N w(v_i, v_j)\;.
	\end{equation}
	The Laplacian operator is a real symmetric and positive semidefinite matrix, which has a set of orthonormal eigenvectors and corresponding eigenvalues. Let $\chi=[\chi_0, \chi_1, \dots, \chi_{N-1}]$ denote these eigenvectors and $\{0=\lambda_0 \leq \lambda_1\leq \dots \leq \lambda_{N-1} \}$ denote the corresponding eigenvalues with $\lambda_{N-1} = \lambda_{\mathrm{max}} = 2$ for the normalized Laplacian $\mathcal{L}$. The eigenvectors form a Fourier basis and the eigenvalues carry a notion of frequencies as in the classical Fourier analysis. The Graph Fourier Transform $\hat{y}(\lambda_i)$ at frequency $\lambda_i$ for signal $y$ and respectively the inverse graph Fourier transform for the vertex $v_n \in \mathcal{V}$ are thus defined as:
	\begin{equation}
	\hat{y}(\lambda_i) = \sum_{n=1}^{N} y(v_n) \chi_i^*(v_n),
	\end{equation}
	and 
	\begin{equation}
	y(v_n) = \sum_{i=0}^{N-1} \hat{y}(\lambda_i) \chi_i(v_n).
	\end{equation}
	\noindent

	Equipped with the above notion of Graph Fourier Transform, we can denote the generalized convolution of two graph signals $y_1$ and $y_2$ with help of the graph Laplacian eigenvectors as
	\begin{equation}
	(y_1 * y_2)(v_n) = \sum_{i=0}^{N-1} \hat{y_1}(\lambda_i) \hat{y_2}(\lambda_i) \chi_i (v_n).
	\label{eq:conv}
	\end{equation}
	By comparing the previous relations, we can see that the convolution in the vertex domain is equivalent to the multiplication in the graph spectral domain. Graph spectral filtering can further be defined as
	\begin{equation}
	\hat{y}_f(\lambda_i) = \hat{y}(\lambda_i) \hat{h}(\lambda_i), 
	\end{equation}
	where $\hat{h}(\lambda_i)$ is the spectral representation of the graph filter $h(v_n)$ and $\hat{y}_f(\lambda_i)$ is the Graph Fourier Transform of the filtered signal $y_f$. In a matrix form, the graph filter can be denoted by $H \in \mathbb{R^{N \times N}}: H=\chi \hat{H} \chi^T$, where $\hat{H}$ is a diagonal matrix constructed on the spectral representation of the graph filter:
	\begin{equation}
	\hat{H}=\mathrm{diag} (\hat{h}(\lambda_0), \dots, \hat{h}(\lambda_{N-1})).
	\label{eq:hatH}
	\end{equation}
	The graph filtering process becomes $y_f = H y $, with the vectors $y$ and $y_f$ being the graph signal and its filtered version in the vertex domain. Finally, we can define the generalized translation operator $T_{v_n}$ for a graph signal $y$ as the convolution of $y$ with a delta function $\delta_{v_n}$ centered at vertex $v_n$ \cite{bb:thanou2014learning}:
	\begin{equation}
	\begin{array}{rl}
	T_{v_n} y & = \sqrt{N}(y * \delta_{v_n}) \\
	& =\sqrt{N}\sum_{i=0}^{N-1}\hat{y}(\lambda_i)\chi_i^*(v_n)\chi_i .
	\end{array}
	\label{eq:transl}
	\end{equation}
	More details about the above graph signal processing operators can be found in \cite{bb:shuman2013emerging}.
	
	\section{Graph-based convolutional network}
	\label{s:overview}
	
	\begin{figure*}[tb!]
		\includegraphics[width=1\linewidth]{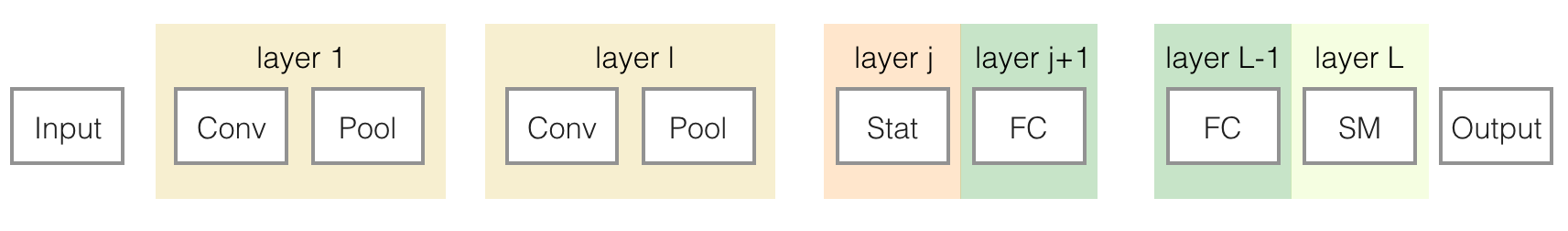}
		\vspace{-1.0cm}
		\caption{{\bf TIGraNet} architecture. The network is composed of an alternation of spectral convolution layers $\mathcal{F}^{l}$  and dynamic pooling layers $\mathcal{P}^{l}$, followed by a statistical layer $\mathcal{H}$, multiple fully-connected layers (FC) and a softmax operator (SM). The input of the network is an image that is represented as a signal $y_0$ on the grid-graph with Laplacian matrix $\mathcal{L}$. The output of the system is a label that corresponds to the most likely class for the input sample.}
		\label{fig:architecture}
	\end{figure*}
	
	We now present the overview of our new architecture, which is illustrated in Fig.~\ref{fig:architecture}. The input to our system can be characterized by a normalized Laplacian matrix $\mathcal{L}$ computed on the grid graph $G$ and the signal $y_0 = (y_0(v_1), \dots, y_0(v_N))$, where $y_0(v_j)$ is the intensity of the pixel $j$ in the input image and $N$ is the number of pixels in the images. Our network eventually returns a class label for each input signal.
	
	In more details, our deep learning architecture consists of an alternation of spectral convolution layers $\mathcal{F}^{l}$ and dynamic pooling layers $\mathcal{P}^{l}$. They are followed by a statistical layer $\mathcal{H}$ and a sequence of fully-connected layers (FC) that precedes a softmax operator (SM) that produces a categorical distribution over labels to classify the input data. Both the spectral convolution and the dynamic pooling layers contain $K_l$ operators denoted by $\mathcal{F}_i^{l}$ and $\mathcal{P}_i^{l}, i=1,\dots,K_l$, respectively.  Each convolutional layer $\mathcal{F}_i^{l}$ is specifically designed to compute transformation-invariant features on grid graphs. The dynamic pooling layer follows the same principles as the classical ConvNet's max-pooling operation but preserves the graph structure in the signal representation. Finally, the statistical layer $\mathcal{H}$ is a new layer designed specifically to achieve invariance to isometric transformations on grid graphs. It does not have any correspondent in the classical ConvNets architectures. We discuss more thoroughly each of these layers in the remainder of this section.
	
	\subsection{Spectral convolutional layer}
	\label{s:conv}
	
	Similarly to the convolutional layers in classical architectures, the spectral convolutional layer $l$ in our network consists of $K_l$ convolutional filters $\mathcal{F}_i^{l}$, as illustrated in Fig.~\ref{fig:conv}. However, each filter $i$ operates in the graph spectral domain. In order to avoid computing the graph eigen-decomposition that is required to perform filtering through Eq. (\ref{eq:conv}), we choose to design our graph filters as smooth polynomial filters of order $M$~\cite{bb:thanou2014learning}, which can be written as
	\begin{equation}
	\hat{h}(\lambda_l) = \sum_{m=0}^M \alpha_m \lambda_l^m .
	\end{equation}
	Following the notation of Eq.~(\ref{eq:hatH}), each filter operator in the spectral convolutional layer $l$ can be written as
	\begin{equation}
	\mathcal{F}_i^{l} = \sum_{m=0}^M \alpha_{i,m}^{l} \mathcal{L}^m ,
	\label{eq:pol_filt_1}
	\end{equation}
	\noindent
	where $\mathcal{L}^m$ denotes the Laplacian matrix of power $m$. The polynomial coefficients $\{\alpha_{i,m}^{l}\}$ have to be learned during the training of the network, for each spectral convolutional layer $l$. Each column of this $N \times N$ operator corresponds to an instance of the graph filter centered at a different vertex of the graph~\cite{bb:thanou2014learning}. The support of each graph filter is directly controlled by the degree $M$ of the polynomial kernel, as the filter takes values only on vertices that are less than M-hop away from the filter center. Larger values of $M$ require more parameters but allow training more complex filters. Therefore, $M$ can be seen as a counterpart of the filter's size parameter in the classical ConvNets.
	
	\begin{figure}[t!]
		\includegraphics[width=1\linewidth]{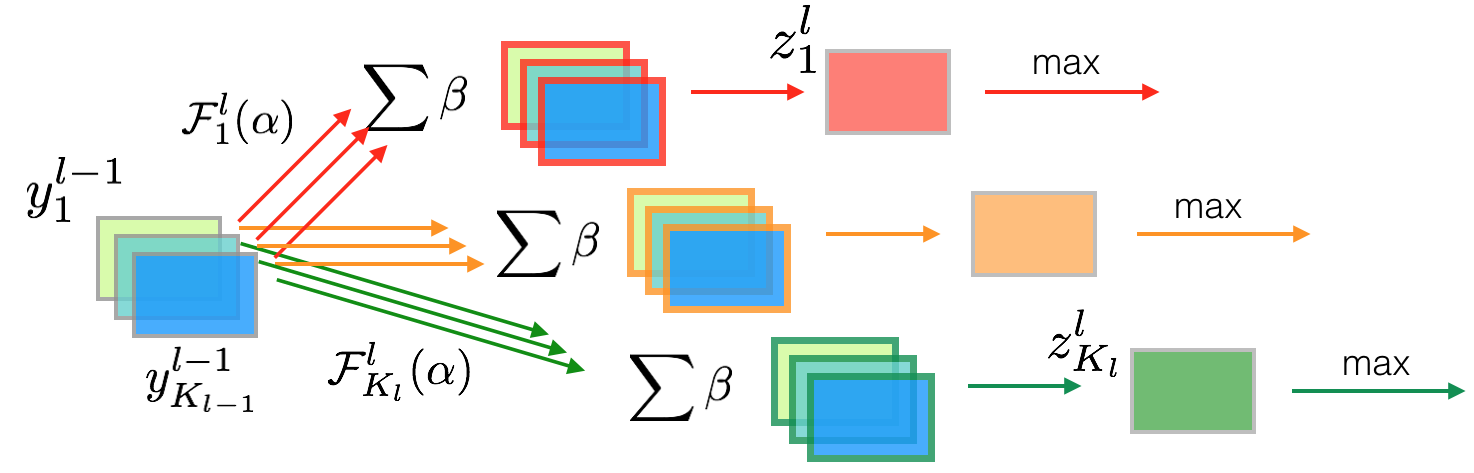}
		\caption{Spectral convolutional layer $\mathcal{F}^{l}$ in {\bf TIGraNet}. The outputs of the previous layer $l-1$ are fed to a set of filter operators $\mathcal{F}_i^{l}$. The outputs of $\mathcal{F}_i^{l}$ are then linearly combined to get the filter maps $z_i^{l}$ that are further passed to the dynamic pooling layer.}
		\label{fig:conv}
	\end{figure}
	
	The filtering operation then simply consists in multiplying the graph signal by the transpose of the operator defined in Eq.~(\ref{eq:pol_filt_1}), namely 
	\begin{equation}
	\tilde{y}_{i,k}^{l} =\left[\mathcal{F}_i^{l}|_{\mathcal{N}_i^{l-1}}\right]^T y_k^{l} , 
	\label{eq:pol_filt_2}
	\end{equation}
	\noindent
	where $y_k^{l}$ and $\tilde{y}_{i,k}^{l}$ are the graph signals at the input and respectively the output of the $l^{th}$ spectral convolutional layer (see Fig.~\ref{fig:conv}). In particular, $y_k^{(1)} = y_0$ is the input image for the first level filter, while at the next levels of the network $y_k^{l}$ is rather one of the feature maps output by the lower layers. We finally use the notation $A |_{\mathcal{N}_i^{l}}$ to represent an operator that preserves the columns of the matrix $A$, which have an index in the set ${\mathcal{N}_i^{l}}$, and set all the other columns to zero. This operator permits computing the filtering operations only on specific vertices of the graphs. It is important to note that the spectral graph convolutional filter permits equivariance to isometric transformations, which is a key property for designing a classifier that is invariant to rotation and translation.

	Finally, the output of the $l^{th}$ spectral convolutional layer is a set of $K_{l}$ feature maps $z_i^{l}$. Each $i^{th}$ feature map is computed as a linear combination of the outputs of the corresponding polynomial filter as follows: 
	\begin{equation}
	z_i^{l} =  \sum_{k=1}^{K_{l-1}} \beta_{k}^{l} \ \tilde{y}_{i,k}^{l} , 
	\label{eq:lincomb}
	\end{equation}
	\noindent
	where the set of signals $\tilde{y}_{i,k}^{l}$ are the outputs of the $i^{th}$ polynomial filter applied on the $K_{l-1}$ input signals of the spectral convolutional layer with Eq. (\ref{eq:pol_filt_2}). The vector of parameters $\{\beta_{k}^{l}\}$, for each spectral convolutional layer $l$ is learned during the training of the network. The operations in the spectral convolutional layer are illustrated in Fig.~\ref{fig:conv}. Lastly, the complexity of spectral filtering can be computed based on the fact that $\mathcal{L}$ and thus the filters are sparse matrices. Then, the complexity is $O(|\mathcal{E}_M | N)$ where $|\mathcal{E}_M|$ is a maximum number of nonzero elements in the columns of $\mathcal{F}_i^{l}$.

	\subsection{Dynamic pooling layer}
	\label{s:pool}
	
	In classical ConvNets the goal of pooling layers is to summarize the outputs of filters for each operator at the previous convolutional layer. Inspired by \cite{bb:dmaxpool} we introduce a novel layer that we refer to as dynamic pooling layer, which basically consists in preserving only the most important features at each level of the network.
	
	In more details, we perform a dynamic pooling operation, which is essentially driven by the set of graph vertices of interest $\Omega^{l}$. This set is initialised to include all the nodes of graph, i.e., $\Omega^{(1)}= \mathcal{V}$. It is then successively refined along the progression through the multiple layers of the network. More particularly, for each dynamic pooling layer $l$, we select the $J_l$ vertices that are part of $\Omega^{l-1}$ and that have the highest values in $z_i^{l}$. The indexes of these largest valued vertices form a set of nodes $\mathcal{N}_i^{l}$. The union of these sets for the different features maps $z_i^{l}$ form the new set $\Omega^{l}$, i.e., 
	\begin{equation}
	\Omega^{l} = \bigcup\limits_{i=1}^{K_{l}}\mathcal{N}_i^{l} .
	\label{eq:omega}
	\end{equation}
	The sets $\Omega^{l}$  drives the pooling operations at the next dynamic pooling layer $\mathcal{P}^{l+1}$. We note that, by construction, the different sets $\Omega^{l}$ are embedded, namely we have $\Omega^{l}  \supseteq \Omega^{l+1}, \ \forall l \in [1..L]$. The Algorithm~\ref{alg:pool} summarize our approach, and Fig.~\ref{fig:pool} illustrates the effect of the pooling process through the different network levels.
	
	\begin{algorithm}[h!]
		\begin{algorithmic}[1]
			\STATE {\bf Input:}\quad \ Feature maps $z_i^{l}, i \in [1,K_l]$
			\STATE \qquad\quad\quad Set of nodes of interest $\Omega^{l-1}$
			\STATE \qquad\quad\quad Number of active nodes, $J_l$
			\vspace{5pt}
			\FOR {$i \in [1,K_l]$} 
			\STATE $\overline{\mathcal{N}_i^{l}} = \mathcal{V}$
			\STATE $\mathcal{N}_i^{l} = \emptyset $
			\FOR {$j \in [1,\max{\left(J_l, |\Omega^{l-1} |\right)}]$} 
			\STATE $\nu =  \mathop{\arg\max} \limits_{v \in \left(\Omega^{l-1} \cap \overline{\mathcal{N}_i^{l}} \right)} z_i^{l}(v)$ 
			\STATE $\overline{\mathcal{N}_i^{l}} = \overline{\mathcal{N}_i^{l}} \setminus \{\nu\}$
			\STATE $\mathcal{N}_i^{l} = \mathcal{N}_i^{l} \cup \{\nu\}$
			\ENDFOR
			\ENDFOR
			\STATE $\Omega^{l}=\bigcup\limits_{i=1}^{K_{l}}\mathcal{N}_i^{l}$
		\end{algorithmic}
		\caption{Dynamic pooling layer at layer $l$.}
		\label{alg:pool}
	\end{algorithm}

	\begin{figure}[!t]
		\centering
		\includegraphics[width=1\linewidth]{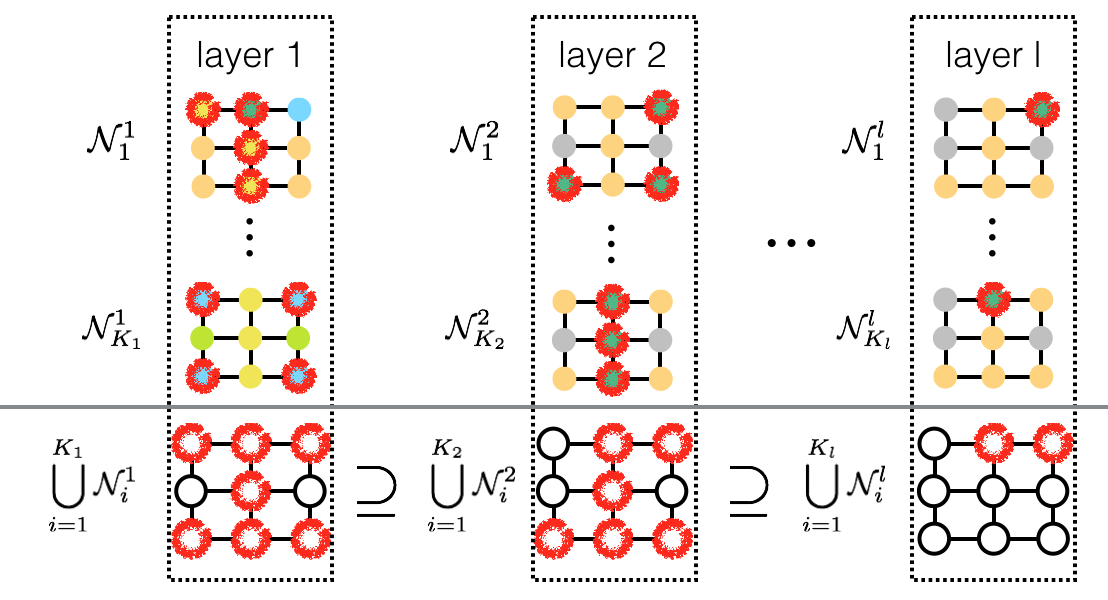} 
		\caption{Pooling process, with succession of dynamic pooling layers with operators $\mathcal{P}_i^{l}$ that each selects the vertices with maximum intensity according to Eq.~(\ref{eq:omega}).
		}
		\label{fig:pool}
	\end{figure}
	
	The sets $\mathcal{N}_i^{l}$ are used to control the filtering process at the next layer. The spectral convolutional filters $\mathcal{F}_i^{l+1}$ compute the output of filters centred on the nodes in $\mathcal{N}_i^{l}$ that are selected by the dynamic pooling layer, and not necessarily for all the nodes in the graph. The filtering operation is given by Eq.~(\ref{eq:pol_filt_2}).
	
	
	Finally, we note that one of the major differences with the classical max-pooling operator is that our dynamic pooling layer is not limited to a small neighbourhood around each node. Instead, it considers the set of nodes of interest $\Omega_l$ which is selected over all graph's nodes. The dynamic pooling operator $\mathcal{P}^{l}$ is thus equivariant to the isometric transformations $R$, similarly to the spectral convolutional layers, which is a key property in building a transformation-invariant classification architecture. The complexity of  $\mathcal{P}^l$ is comparable with the classical pooling operator as the task of $\mathcal{P}^l$ is equivalent to finding $J_l$ highest statistics. Using the selection algorithm~\cite{bb:Knuth98} we can reach the average computational complexity of $O(N)$.
	
	\subsection{Upper layers}
	\label{s:hist}
	
	After the series of alternating spectral convolutional and dynamic pooling layers, we add output layers that compute the label probability distributions for the input images. Instead of connecting directly a fully-connected layer as in classical ConvNet architectures, we first insert a new statistical layer, whose output is then fed into fully-connected layers (see Fig.~\ref{fig:architecture}). 
	
	The main motivation for the statistical layer resides in our objective of designing a transformation-invariant classification architecture. If fully-connected layers are added directly on top of the last dynamic pooling layers, their neurons would have to memorize large amounts of information corresponding to the different positions and rotation of the visual objects. Instead, we propose to insert a new statistical layer, which computes transformation-invariant statistics of the input signal distributions.
	
	In more details, the statistical layer estimates the  distribution of values on the active nodes after the last pooling layer. The inputs of the statistical layer $j$ are denoted as $\tilde{z}_i$, which correspond to the outputs $z_i^{j-1}$ of the last pooling layer $\mathcal{P}^{j-1}$ where the values on non-active nodes (i.e., the nodes in $\overline{\mathcal{N}_i^{l}}$) are set to zero. We then calculate multiscale statistics of these input features maps using Chebyshev polynomials of the graph Laplacian. These polynomials have the advantage of a fast computation due to their iterative construction, and they can be adapted to distributed implementations \cite{bb:shuman2011chebyshev}. In order to construct these polynomials, we first shift the spectrum of the Laplacian $\mathcal{L}$ to the interval $[-1,1]$, which is the original support of Chebyshev polynomials. Equivalently, we set $\tilde{\mathcal{L}} = \mathcal{L} - I $. 
	
	\begin{figure}[!t]
		\centering
		\includegraphics[width=1\linewidth]{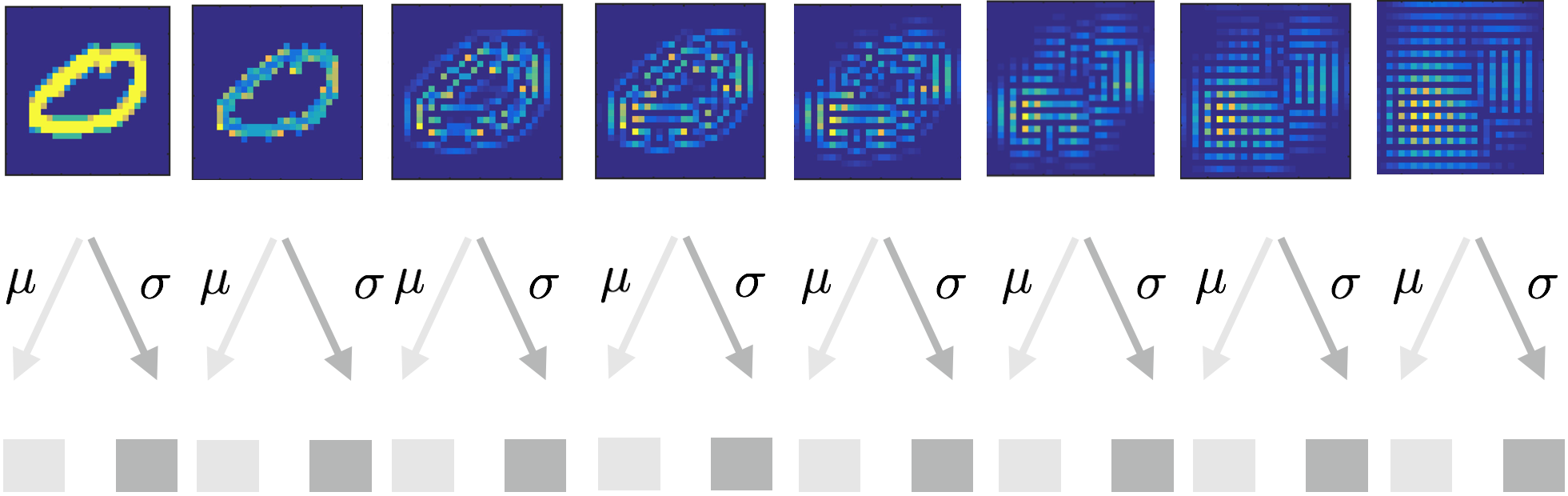} 
		\caption{Illustration of the statistical layer, which calculates multiscale statistics by filtering each input feature map by Chebyshev polynomials filters \cite{bb:shuman2011chebyshev} and taking their second-order statistics. Resulting statistics are vectorized and build an invariant to graph isometric transformation.  
		}
		\label{fig:stat_layer_im}
	\end{figure}
	
	As suggested in \cite{bb:Mikhael}, for each input feature map $\tilde{z}_i$ we iteratively construct a set of signals $t_{i,k}$ using graph Chebyshev polynomials of order $k$, with $k \leq K_{max}$, as
	\begin{equation}
	t_{i,k} = 2 \tilde{\mathcal{L}} t_{i, k-1} - t_{i, k-2},
	\end{equation}
	\noindent
	with $t_{i, 0} = \tilde{z}_i$ and $t_{i, 1}=\tilde{\mathcal{L}} \tilde{z}_i$.
	We finally compute a feature vector that gathers the first order statistics of the magnitude of these signals, namely the mean $\mu_{i,k}$ and variance $\sigma^2_{i,k}$ for each signal $| t_{i,k} |$  (see Fig.~\ref{fig:stat_layer_im}). This forms a feature vector $\phi_{i}$ of $2 K_{max}+2$ elements, i.e., $\phi_{i} = [\mu_{i,0},\sigma^2_{i,0}, \hdots, \mu_{i,K_{max}},\sigma^2_{i,K_{max}} ]$. We choose these particular statistics as they are prone to efficient gradient computation, which is important during back propagation. Furthermore, we note that such feature vectors are inherently invariant to transformation such as translation or rotation.
	
	The feature vectors $\phi_i$'s are eventually sent to a series of fully-connected layers similarly to classical ConvNet architectures. However, since our feature vectors are transformation invariant, the fully-connected layers will also benefit from these properties. This is in opposition to their counterparts in classical ConvNet systems, which need to compute position-dependent parameters. The details about fully-connected layer parameters are given in the Section~\ref{s:exp}. The output of the fully-connected layers is then fed to a softmax layer \cite{bb:bishop:2006softmax}, which finally returns the probability distribution of a given input sample to belong to a given set of classes. 
	
	\subsection{Training}
	We use supervised learning and train our network so that it maximizes the log-probability of estimating the correct class of training samples via logistic regression. Overall, we need to compute the values of the parameters in each convolutional and in fully-connected layers. The other layers do not have any parameter to be estimated.  We train the network using a classical back-propagation algorithm and learn the parameters using ADAM stochastic optimization~\cite{bb:adam}.
	
	We provide more details here about the computation that are specific to our new architecture. We refer the reader to \cite{bb:rumelhart1988learning} for more details about the overall training procedure. The back-propagation in the spectral convolutional layer is performed by evaluating the partial derivatives with respect to the parameters $\alpha: \alpha \in \mathbb{R}^{K_{l-1} \times M}$ of the spectral filters, and to the parameters $\beta: \beta \in \mathbb{R}^{K_{l-1}}$ of the feature map construction. The partial derivatives read
	\begin{equation}
	\frac{\partial E} {\partial \alpha^l_{i,m}} = \sum_{k=0}^{K_{l-1}} \beta_k^l \left[\mathcal{L}^m |_{\mathcal{N}_i^{l-1}}\right] y_{k}^{l-1} \frac{\partial E}{\partial z^{l}_i}, \\
	\label{eq:der1}
	\end{equation}
	\begin{equation}
	\frac{\partial E} {\partial \beta_j^l} =  \sum_{m=0}^M \alpha^l_{i,m} \left[\mathcal{L}^m |_{\mathcal{N}_i^{l-1}}\right] y_{j}^{l-1} \frac{\partial E}{\partial z^{l}_i}, \\
	\label{eq:der2}
	\end{equation}
	\noindent
	where $E$ is the negative log-likelihood cost function, $z^{l}_i=y^{l}_i$ is the output feature map of layer $l$, $K_{l-1}$ denotes the number of feature maps at the previous layer of the network, $M$ is the polynomial degree of the convolutional filter and $\mathcal{L}$ is the Laplacian matrix. Then, we further need to compute the partial derivatives with respect to the previous feature maps as follows
	\begin{equation}
	\frac{\partial E}{\partial y^{l-1}_j} = \beta_j^l \sum_{m=0}^M \alpha_{i,m}^l \left[\mathcal{L}^m |_{\mathcal{N}_i^{l-1}}\right] \frac{\partial E}{\partial z^{l}_i}. \\
	\label{eq:der_fm}
	\end{equation}
	
	Our new dynamic pooling layers, as well as our statistical layer do not have parameters to be trained. Similarly to the max-pooling operator our dynamic pooling layer  permits back-propagation through the active nodes since the gradient is 0 for the non-selected nodes and not zero for the chosen ones.  
	Further, the statistical layer back-propagates the gradients as follows:
	\begin{equation}
	\frac{\partial{E}} {\partial t_{i, k}}  =\frac{1}{N} \frac{\partial{E}}{\partial \mu_{i,k}},
	\label{eq:der_stat1}
	\end{equation}
	\begin{equation}
	\frac{\partial{E}} {\partial t_{i, k}}  = \frac{2 (N-1)}{N^2} \sum_{i=1}^N \left( t_{i, k} - \mu_{i,k} \right) \frac{\partial E}{\partial \sigma^2_{i, k}},\\
	\label{eq:der_stat2}
	\end{equation}
	\noindent
	where $\mu_{i,k}, \sigma^2_{i,k}$ are the inputs to the first fully-connected layer and the outputs of the statistical layer. The derivatives $\partial E / \partial \tilde{z}_i$ are then computed as:
	\begin{equation}
	\frac{\partial E}{\partial \tilde{z}_i} = \sum_{k=0}^{K_{max}}\frac{\partial E}{\partial t_{i,k}} \frac{\partial t_{i,k}}{\partial \tilde{z}_i},
	\end{equation}
	\noindent
	where $\partial t_{i,k} / \partial \tilde{z}_i$ are simply the derivatives of Chebyshev polynomials~\cite{bb:shuman2011chebyshev} with maximum order $K_{max}$. Please note that we use the non-linear absolute function $|t_{i,k}|$ before statistical layer, therefore, the gradient at $t_{i,k}=0$ is not defined. In practice, however, we set it to $0$, which gives us a nice property of encouraging some  feature map values to be $0$ and favors sparsity.
	
	Finally, the parameters of the fully-connected layers are trained in a classical way, similarly to the training of fully-connected layers in ConvNet architectures \cite{bb:rumelhart1988learning}. 
	
	\section{Equivariance of graph filters}
	\label{s:equivaricance}
	
	In this section we analyze the equivariance property of the features produced by the graph-based convolutional layer. We start our analysis by proving that independently of the input image $y$, graph filters are equivariant to isometric transformations of $y$ such as image rotations by multiplier of $90^\circ$ and translations by an integer number of pixels in horizontal and vertical directions. We refer to these transformations as the graph isometric ones.
	
	We then show that given some constraints on the smoothness conditions on $y$, our graph-convolutional filters are \emph{quasi-equivariant} to any isometric transformations (which comprise rotations by an angle of any degree and translations by any real number of pixels). 
	
	\subsection{Graph isometric transformation}
	\label{sec:90deg}
	
	In this section we formally define isometric transformations $g$ on regular graphs and show that the Laplacian polynomial filters, introduced in Eq.~(\ref{eq:pol_filt_1}), produce inherently equivariant features with respect to such transformations $g$.
	\newtheorem{mydef}{Definition}
	\begin{mydef}
		A graph isometric transformation $g$ is a bijective mapping $g : \mathcal{V} \rightarrow \mathcal{V}$ that preserves distances between all pairs of adjacent nodes $v_i \sim v_j: w(v_i, v_j)=1$ and acts as a permutation function for signal $y$: $y_g=g(y)$ that can be formally described as
		\begin{equation}
		\small
		\forall v_k \in \mathcal{V}, \; \exists \; !\; v_j \in \mathcal{V}: y_g(v_k) = y(v_j) \;,
		\label{eq:def}
		\end{equation} 
		\noindent
		that preserves their neighbourhood and where $!$ indicates that the correspondence between vertices $v_k$ and $v_j$ is a bijective mapping. 
		\label{def:graph_eq}
	\end{mydef}
	
	%
	In other words, a graph isometric transformation $g$ is a permutation of the graph nodes $\mathcal{V}$. This results in $g$ being a combination of the following basic operations applied to the image signal $y$, defined on the regular graph $G$:
	\begin{itemize}
		\setlength\itemsep{0pt}
		\setlength{\parskip}{0pt}
		\item Image rotations by $ \frac{\pi}{2}, \pi, \frac{3\pi}{2} $;
		\item Image reflection;
		\item Image translation by an integer number of pixels.
	\end{itemize}
	If we ignore the border effects, graph isometric transformations $g$ satisfy the four fundamental group properties~\cite{bb:Herstein06} and, therefore, form a group $\mathcal{G}$.
	
	
	\newtheorem{theorem}{Theorem}
	\newtheorem{lemma}{Lemma}
	
	\begin{theorem}
		For any operation $g$ from a group of graph isometric transformations $\mathcal{G}$ the response of a polynomial filter defined in Eq.~(\ref{eq:pol_filt_1}) produces an equivariant feature representation.
		\label{t:pol_fil}
	\end{theorem}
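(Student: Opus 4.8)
The plan is to represent every graph isometric transformation $g \in \mathcal{G}$ by an orthogonal permutation matrix $P_g$ acting on graph signals, so that $y_g = g(y) = P_g y$ and $P_g^{T} = P_g^{-1}$, and then to reduce the theorem to a single commutation identity: if the polynomial operator of Eq.~(\ref{eq:pol_filt_1}) commutes with $P_g$, then applying the filter to a transformed signal equals transforming the filtered signal, i.e.\ $[\mathcal{F}_i^{l}]^{T}(P_g y) = P_g\big([\mathcal{F}_i^{l}]^{T} y\big)$, which is precisely equivariance. Everything then hinges on showing $P_g \mathcal{F}_i^{l} = \mathcal{F}_i^{l} P_g$.

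\textbf{Step 1 (the permutation commutes with the Laplacian).} I would first show $P_g \mathcal{L} P_g^{T} = \mathcal{L}$, equivalently $P_g \mathcal{L} = \mathcal{L} P_g$. This is where Definition~\ref{def:graph_eq} enters: $g$ is a bijection of $\mathcal{V}$ preserving adjacency, and since every edge of the grid graph has unit weight $w(v_i,v_j)=1$, the degree $d(v_i)$ equals the number of neighbours of $v_i$ and is therefore also preserved. Writing $(P_g \mathcal{L} P_g^{T})_{ij} = \mathcal{L}_{g^{-1}(v_i),\,g^{-1}(v_j)}$ and comparing with the entrywise definition~(\ref{eq:laplacian}), the diagonal entries match because $d(g^{-1}(v_i)) = d(v_i)$ and the off-diagonal entries match because $v_i \sim v_j \Leftrightarrow g^{-1}(v_i) \sim g^{-1}(v_j)$; border effects are ignored exactly as stated in the text.

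\textbf{Steps 2--3 (commutation with the filter, then equivariance of the feature maps).} A one-line induction gives $P_g \mathcal{L}^{m} = \mathcal{L}^{m} P_g$ for all $m$, hence by linearity of Eq.~(\ref{eq:pol_filt_1}),
\[
P_g \mathcal{F}_i^{l} \;=\; \sum_{m=0}^{M} \alpha_{i,m}^{l}\, P_g \mathcal{L}^{m} \;=\; \sum_{m=0}^{M} \alpha_{i,m}^{l}\, \mathcal{L}^{m} P_g \;=\; \mathcal{F}_i^{l} P_g .
\]
Since $\mathcal{F}_i^{l}$ is a polynomial in the symmetric matrix $\mathcal{L}$ it is itself symmetric, so $[\mathcal{F}_i^{l}]^{T} = \mathcal{F}_i^{l}$ and the same identity holds for the transposed operator that actually appears in Eq.~(\ref{eq:pol_filt_2}). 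Therefore $\tilde{y}_{i,k}^{l}(g(y)) = [\mathcal{F}_i^{l}]^{T} P_g y_k^{l} = P_g\,\tilde{y}_{i,k}^{l}(y)$, i.e.\ each filtered signal is equivariant, and since the feature map $z_i^{l}$ of Eq.~(\ref{eq:lincomb}) is a fixed linear combination (coefficients $\beta_k^{l}$ independent of $g$) of these signals, $z_i^{l}$ is equivariant as well. If one wishes to keep the column mask $\mathcal{F}_i^{l}|_{\mathcal{N}_i^{l-1}}$ inside the statement, I would additionally note that the pooled index set transforms covariantly, $\mathcal{N}_i^{l-1} \mapsto g(\mathcal{N}_i^{l-1})$, because dynamic pooling selects nodes purely by signal value, so the masking operation commutes with $P_g$ too.

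I expect Step~1 to be the only genuine obstacle: one must argue carefully that preservation of adjacency and unit edge weights forces the conjugation $P_g \mathcal{L} P_g^{T} = \mathcal{L}$, paying attention to the degree-normalization term and to the boundary of the finite grid (which the statement explicitly sets aside); Steps~2 and~3 are then routine linear algebra.
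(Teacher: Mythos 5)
Your proposal is correct, but it follows a genuinely different route from the paper's. The paper proves the theorem by induction on the filter degree: the base case ($M=1$) is an explicit node-wise computation of the filter response on the zero-padded 4-nn grid (Eq.~(\ref{eq:filtered_signal})), showing $f_g(v_k)=f(v_j)$ for vertices corresponding under $g$, and the inductive step writes $\mathcal{F}_{k}(y)=\alpha_0 y + \mathcal{L}\,\mathcal{F}_{k-1}(y)$ and pushes $g$ through by linearity and the inductive hypothesis. You instead encode $g$ as a permutation matrix $P_g$ and reduce everything to the single commutation identity $P_g\mathcal{L}=\mathcal{L}P_g$, which follows from preservation of adjacency and of (unit-weight) degrees; commutation with every power $\mathcal{L}^m$, and hence with any polynomial filter, is then immediate, and symmetry of $\mathcal{L}$ disposes of the transpose appearing in Eq.~(\ref{eq:pol_filt_2}). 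Your argument is shorter and more general: it needs no explicit 4-nn structure or boundary padding, applies verbatim to any automorphism of any (regular) graph, and in fact yields equivariance for any function of $\mathcal{L}$, not only polynomials. What the paper's concrete, vertex-level induction buys is continuity with the rest of Section~\ref{s:equivaricance}: the same explicit neighborhood computation and the same degree-by-degree decomposition are reused in Theorems~\ref{t:im_rot} and~\ref{t:im_trans}, where only quasi-equivariance holds and a clean algebraic commutation with a permutation matrix is no longer available. Your closing remark about the masked operator $\mathcal{F}_i^{l}|_{\mathcal{N}_i^{l-1}}$ and the covariant transformation of the pooled index set goes beyond the theorem as stated (which concerns the unmasked filter of Eq.~(\ref{eq:pol_filt_1})), but it is consistent with the paper's separate claim that the dynamic pooling layer is itself equivariant.
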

	
	\begin{proof}[Proof of Theorem~\ref{t:pol_fil}]
		Let us denote by $y_g$ the transformed version of signal $y : y_g = g(y)$.
		Given the Def.~\ref{def:graph_eq} we can prove the theorem by induction. For simplicity and without loss of generality we prove the theorem for a  4-nn grid graph.
		
		
		\textbf{Base case: } Let us first consider the polynomial filter $\mathcal{F}$ of degree $M=1$. The result of the filtering operation $\mathcal{F}$, introduced in Eq.~(\ref{eq:pol_filt_1}), applied to a signal $y$ will then be the following:
		\begin{equation}
		\mathcal{F}(y) = (\alpha_0 + \alpha_1 \mathcal{L}) y \;, \; \{\alpha_0, \alpha_1\} \in \mathbb{R} \;.
		\label{eq:filt_1deg}
		\end{equation} 
		Here $\mathcal{L}$ is the normalized Laplacian matrix, introduced in Eq.~(\ref{eq:laplacian}).
		For the sake of simplicity and in order to avoid border effects we extend the graph by adding nodes to the image boundary and padding them with zeros. Hence all nodes $v_j$ of the initial graph have degree $d(v_j) = 4$. Consequently, the filtered signal $\mathcal{F}\big{(}y\big{)}(v_j)$ at node $v_j$ can be computed as:
		\begin{equation}
		\small
		f(v_j) \df \mF\left(y\right)(v_j) = \alpha_0 y(v_j) + \alpha_1 y(v_j) - \frac{\alpha_1}{4} \sum\limits_{i: v_i \sim v_j} y(v_i),
		\label{eq:filtered_signal}
		\end{equation} 
		where $v_i \sim v_j$ indicates adjacent nodes. 
		
		We then apply the same filter $\mathcal{F}$ to the transformed signal $y_g$, which is formed from $y$ via a graph isometric transformation $g$. According to its definition (see Eq.~(\ref{eq:def})), $g$ has the following property: $\forall v_k \mapsto v_j : y_g(v_k) = y(v_j)$ and the neighborhood of the node $v_k$ maps to the neighborhood of the node $v_j$. 
		%
		Therefore, the result of $\mathcal{F}$, applied to $y_g$ reads 
		\begin{equation}
		\small
		f_g(v_k) \df \mathcal{F}(y_g)(v_k) = \alpha_0 y_g(v_k) + \alpha_1 y_g(v_k) - \frac{\alpha_1}{4} \sum\limits_{l: v_l \sim v_k} y_g(v_l).
		\label{eq:filtered_trans_signal}
		\end{equation}
		Our goal is then to show that $f_g$ from Eq.~(\ref{eq:filtered_trans_signal}) is equivariant to $f$ from Eq.~(\ref{eq:filtered_signal}). 		
		According to Eq.~(\ref{eq:def}), the signal $y_g$ in the neighborhood of the vertex $v_k$ is identical to $y$ in the neighborhood of $v_j$. Therefore Eqs.~(\ref{eq:filtered_signal}) and~(\ref{eq:filtered_trans_signal}) are equal, which leads to 
		\begin{equation}
		f_g(v_k) = f(v_j)\;,
		\label{eq:filter_equality}
		\end{equation}
		\noindent
		where $v_k \in G$ is mapped to $v_j \in G$ via $g$.
		Further, Eq.~(\ref{eq:filter_equality}) is valid for any node $v_k$, and for each of these $v_k$'s there exists a mapping to $v_j$ according to the graph isometric transformation $g$. Therefore, by Def.~\ref{def:graph_eq}, $f_g$ and $f$ are related via $g$ as $f_g = g(f)$ and we can write:
		
		\begin{equation}
		\mathcal{F}(y_g) = g \left(\mathcal{F}(y) \right)\; \text{or, equivalently }	\mathcal{F}\left(g(y) \right) = g \left(\mathcal{F}(y) \right)\; .
		\label{eq:lastp1}
		\end{equation} 
		\noindent
		As Eq.~(\ref{eq:lastp1}) is obtained without any assumptions on the parameters $(\alpha_0, \alpha_1)$ of the polynomial filter $\mathcal{F}$, we can conclude that the response of any Laplacian polynomial filter $\mathcal{F}$ of degree $M=1$ is equivariant to graph isometric transformations $g$. The latter is, therefore, true for a special case of $\alpha_1 = 0$, which corresponds to the polynomial filters $\mathcal{F}$ of degree zero.
		
		\textbf{Inductive step: } We now show that the polynomial filter $\mathcal{F}$ of any degree is equivariant to graph isometric transformation $g$. We assume that the filter 
		\begin{equation}
		\mathcal{F}_{k-1}=\sum_{m=0}^{k-1} \alpha'_m \mathcal{L}^m
		\end{equation} 
		\noindent
		of degree $M = k-1$ is equivariant with respect to $g$. Our goal is then to show that the polynomial filter $\mathcal{F}_k$ of degree $M = k$ that has the following form:
		\begin{equation}
		\mathcal{F}_{k}(y) = \sum\limits_{m=0}^{k} \alpha_m \mathcal{L}^m y =
		\alpha_0 y + \mathcal{L} \left(\sum\limits_{m=0}^{k-1} \alpha_{m+1} \mathcal{L}^{m} y \right) \;,
		\label{eq:ind}
		\end{equation} 
		\noindent
		is also equivariant with respect to $g$.
		To do so we take a polynomial filter $\mF_{k-1}$, with coefficients $a'_m = a_{m+1}$  and rewrite Eq.~(\ref{eq:ind}) as 
		\begin{equation}
		\mathcal{F}_{k}(y) = \alpha_0 y + \mathcal{L} \mathcal{F}_{k-1}(y).
		\label{eq:ind_f}
		\end{equation} 
		\noindent
We then apply the isometric transformation $g$ to both parts of Eq.~(\ref{eq:ind_f}):
		\begin{equation}
		\small
		g(\mathcal{F}_{k}(y)) = g(\alpha_0 y + \mathcal{L} \mathcal{F}_{k-1}(y)) = \alpha_0 g(y) + \mathcal{L} g(\mathcal{F}_{k-1}(y)).
		\label{eq:ind_f_blast}
		\end{equation} 
		\noindent
As by our assumption $\mathcal{F}_{k-1}$ is equivariant with respect to $g$ and $y_g \df g(y)$, we can rewrite (\ref{eq:ind_f_blast}) as
		\begin{equation}
		g(\mathcal{F}_{k}(y)) = \alpha_0 y_g + \mathcal{L} \mathcal{F}_{k-1}(y_g) \df \mathcal{F}_{k}(y_g).
		\label{eq:ind_f_last}
		\end{equation} 
		\noindent
Based on Eqs.~(\ref{eq:ind_f_blast})~and~(\ref{eq:ind_f_last}) we can conclude that a polynomial filter of any degree produces features  that are equivariant  to the graph isometric transformations.
		
	\end{proof}
	
	
	\subsection{General isometric transformations}
	\label{s:git}
	
	In this section we extend the result of Section~\ref{sec:90deg} to general isometric transformations. We start with the introduction of our image model, we then introduce the quasi-equivariance property and prove that under some assumptions on the  image $y$ defined on the graph nodes, Laplacian polynomial filters are quasi-equivariant to general isometric transformations. The latter 
	%
	can be represented by any combination of the following basic operations:
	\begin{itemize}
		\setlength\itemsep{0pt}
		\setlength{\parskip}{0pt}
		\item Graph isometric transformations, defined in Section~\ref{sec:90deg};
		\item Image rotation by an arbitrary angle $\gamma$;
		\item Image translation by a real number of pixels $\xi$.
	\end{itemize}
	
	\subsubsection{Image model and quasi-equivariance}
	
	Camera captures 3D world and represents it as a 2D image. Therefore, an image is an approximation of a real scene with discrete pixel values. For the sake of simplicity let us represent the real 3D image signal $y$ as a two-dimensional function $f = f(a,b)$, where $0 \leq a,b \leq 1$.  
	
	Thus, the transformed version of the signal, $y_g$, is also defined $\forall (a,b)$ for any transformation $g_{\gamma,\xi}$. However, the graph filter operations are defined only for such $(a,b)$ that correspond to the graph nodes, therefore we cannot directly apply $g_{\gamma,\xi}$ to it if this condition is not satisfied. To overcome this problem we approximate $\gamma$ and $\xi$ with $\bar{\gamma}, \bar{\xi}$ that satisfy the following conditions:
	\begin{equation}
	\bar{\gamma} \; : 	
	\begin{cases}
	\bar{\gamma} = \frac{k\pi}{2},\; k \in \mathbb{Z}; \\
	||\bar{\gamma} - \gamma || \leq \frac{\pi}{4},
	\end{cases} \quad
	\bar{\xi} \; : 
	\begin{cases}
	\bar{\xi} = k,\; k \in \mathbb{Z}; \\
	||\bar{\xi} - \xi || \leq \frac{1}{2},
	\end{cases}
	\end{equation}
	\noindent
	and introduce the graph isometric transformation $\bar{g}_{\gamma, \xi} = g_{\bar{\gamma}, \bar{\xi}}$, which we refer to as \emph{closest} graph isometric transformation of the original general transformation $g_{\gamma, \xi}$.
	As discussed in Section~\ref{sec:90deg}, such transformation is defined only on graph nodes, which sample signal with steps $\Delta a$ and $\Delta b$. Finally, we introduce the quasi-equivariance property  as follows: 	
	\begin{mydef}
		The quasi-equivariance of the operator $\mF$ with respect to the isometric transformation $g_{\gamma,\xi}$ is defined as a small absolute value of the difference between signals $\mathcal{F}(g_{\gamma,\xi} (y))$ and the graph isometric transformation $\bar{g}_{\gamma,\xi}(\mathcal{F}(y))$ computed at node $v$ in $G$. Equivalently, we have:
		\begin{equation}
		\left| \mathcal{F}\big(g_{\gamma,\xi} (y)\big)(v) - \bar{g}_{\gamma,\xi} \big(\mathcal{F}(y)(v)\big) \right|  \leq \epsilon, \forall v,
		\label{eq:equivariance_ref}
		\end{equation}
		\noindent	
		where $\epsilon$ is a small value, when $\mathcal{F}$ is a quasi-equivariant.
	\end{mydef}

	
	%
	
In order to prove the quasi-equivariance of the polynomial filters $\mathcal{F}$, defined in  Eq.~(\ref{eq:pol_filt_1}), with respect to general isometric transformations we show that the response of $\mathcal{F}$ is quasi-equivariant with respect to each of the basic operations, defined in the beginning of Section~\ref{s:git}. In Section~\ref{sec:90deg} we have shown that polynomial filters are equivariant to any graph isometric transformation, which also means that the quasi-equivariance property is satisfied. Therefore, in this section we focus on proving the quasi-equivariance of $\mathcal{F}$ with respect to arbitrary image rotations and image translations by a real number of pixels.



	\subsubsection{Rotation}
	\label{sec:rot}
	
In order to prove the quasi-equivariance of polynomial filters to rotation, we first observe that for every pixel location $p$ in an image, any random image rotation around an arbitrary point can be decomposed into a set of two translations and one rotation around $p$. Therefore, in this section we focus on proving the quasi-equivariance of $\mF$ with respect to rotation around the vertex $v \in G$, where $\mF$ is applied. Then in Section~\ref{sec:trans} we discuss in more details the quasi-equivariance of $\mF$ with respect to an arbitrary image translation. Further, without loss of generality, we consider the rotation angle $\gamma$ to be in the range $[-\frac{\pi}{4},\frac{\pi}{4})$, due to the fact that any random rotation can be decomposed into an integer number of rotations by $\pi/2$ and a rotation by $\gamma \in [-\frac{\pi}{4},\frac{\pi}{4})$. As illustrated in Section~\ref{sec:90deg}, polynomial filters are equivariant with respect to rotations by $\pi/2$, thus we only need to prove their equivariance with respect to $\gamma \in [-\frac{\pi}{4},\frac{\pi}{4})$.
	
Let $y_g = g_\gamma(y)$ denote the rotated version of the signal $y$ by $\gamma \in [-\frac{\pi}{4},\frac{\pi}{4})$ around a vertex $v_r$, where a polynomial filter $\mF$ is applied. In the remainder of this section we show that under some assumptions on the second derivative of the signal $y$, the Eq.~(\ref{eq:equivariance_ref}) is valid for any polynomial filter $\mathcal{F}$ and for any $\gamma$ that defines the image rotation $g_\gamma$. Let us denote the equivariance gap by 
\begin{equation}
\Delta_\mathcal{F} \df \left| \mathcal{F}\big(g_{\gamma} (y)\big)(v) - \bar{g}_{\gamma}\big(\mathcal{F}(y)(v)\big) \right| ,
\label{eq:delta_F_quasi}
\end{equation}
where $\bar{g}_{\gamma}$ is the graph isometric transformation that is closest to $g$. Due to the fact that $\mF$ is an isotropic filter and as we are considering rotations $\gamma \in [-\frac{\pi}{4},\frac{\pi}{4})$ around vertex $v_r$, where $\mF$ is applied, we obtain the following equality $\bar{g}_{\gamma}\big(\mF(y)(v)\big) \equiv \mF(y)(v)$ and rewrite Eq.~(\ref{eq:delta_F_quasi}) as
\begin{equation}
\Delta_\mathcal{F} \df \left| \mathcal{F}\big(g_{\gamma} (y)\big)(v) - \mathcal{F}(y)(v) \right|, \forall v. 
\end{equation}
We now have the following theorem.
	
\begin{theorem}
		
		The absolute value of the difference $\Delta_\mathcal{F}$ between signals $\mathcal{F}(g_\gamma (y))$ and $\mathcal{F}(y)$ satisfies the following inequality:
		%
		\begin{equation}
		\Delta_\mathcal{F} (v) \leq \left(\sum_{k=1}^M \norm{\alpha_k} 2^{k-3} \right) \left| (1 - \sin\gamma - \cos\gamma) \bar{\mathcal{Z}} \right|,
		\label{eq:filt_diff_rot}
		\end{equation} 
		\noindent
		where $M$ is the degree of $\mathcal{F} = \sum_{k=0}^M \alpha_k \mathcal{L}^k y$ and $\bar{\mathcal{Z}}$ depends on the second derivative of the image signal $y \df f(a,b)$ as follows:
		\begin{equation}
		\small
		\bar{\mathcal{Z}} =
		\max_{0 \leq a,b \leq 1} \left| 
		\begin{bmatrix}
		\partial_a^2 f (a,b) \\
		\partial_b^2 f (a,b) \\
		\end{bmatrix}
		\right|^{\intercal}
		\begin{bmatrix}
		\Delta a^2 \\
		\Delta b^2 \\
		\end{bmatrix} + o(\Delta a^2) + o(\Delta b^2).
		\label{eq:sec_der}
		\end{equation}	
		\noindent
		Here $\Delta a, \Delta b$ are the distances between the graph nodes in horizontal and vertical directions respectively which corresponds to the resolution of the image; and $o(\cdot)$ is the ``little-o'' notation that describes function's asymptotic behavior~\cite{bb:Landau09}.
		\label{t:im_rot}
	\end{theorem}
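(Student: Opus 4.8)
The plan is to first reduce the whole statement to a single claim about how a rotation by $\gamma\in[-\frac{\pi}{4},\frac{\pi}{4})$ around the vertex $v_r$ perturbs the value of one application of the Laplacian $\mathcal{L}$ on a signal, and then bootstrap that estimate through the powers $\mathcal{L}^k$ appearing in $\mathcal{F}=\sum_{k=0}^M\alpha_k\mathcal{L}^k y$. Concretely, I would start by writing $\mathcal{F}(g_\gamma(y))(v)-\mathcal{F}(y)(v)=\sum_{k=1}^M\alpha_k\big(\mathcal{L}^k(g_\gamma y)(v)-\mathcal{L}^k y(v)\big)$ (the $k=0$ term cancels exactly, which is why the sum in~\eqref{eq:filt_diff_rot} starts at $k=1$), so by the triangle inequality it suffices to bound $|\mathcal{L}^k(g_\gamma y)(v)-\mathcal{L}^k y(v)|$ by $2^{k-3}\,|(1-\sin\gamma-\cos\gamma)\bar{\mathcal{Z}}|$ for each $k\ge 1$.

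The heart of the argument is the base case $k=1$. Here I would use the image model: $y$ is the sampling on the grid nodes of a smooth function $f(a,b)$, $0\le a,b\le 1$, with horizontal/vertical spacings $\Delta a,\Delta b$. At the center vertex $v_r$, the normalized Laplacian (with the zero padding making every degree equal to $4$, as in the proof of Theorem~\ref{t:pol_fil}) gives $\mathcal{L}y(v_r)=y(v_r)-\frac14\sum_{v_i\sim v_r}y(v_i)$, which is exactly (up to the spacing factors) the standard $5$-point discrete Laplacian, whose Taylor expansion is $-\frac14(\partial_a^2 f\,\Delta a^2+\partial_b^2 f\,\Delta b^2)+o(\Delta a^2)+o(\Delta b^2)$. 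When we rotate by $\gamma$ around $v_r$, the four neighbours move to new positions at distance still governed by $\Delta a,\Delta b$ but along rotated axes; resampling $f$ there and re-expanding, the difference between the rotated discrete Laplacian and the original one is, to leading order, controlled by the \emph{same} second-derivative data but with the geometric factor $(1-\sin\gamma-\cos\gamma)$ coming from how $\cos\gamma$ and $\sin\gamma$ redistribute the axis-aligned increments onto the rotated stencil. Collecting the worst case over $(a,b)$ yields precisely $\bar{\mathcal{Z}}$ as defined in~\eqref{eq:sec_der}, and the constant $2^{1-3}=\frac14$ matches the $\frac14$ in the Laplacian normalization. For the inductive step, I would write $\mathcal{L}^k(g_\gamma y)-\mathcal{L}^k y=\mathcal{L}\big(\mathcal{L}^{k-1}(g_\gamma y)-\mathcal{L}^{k-1}y\big)$, observe that $\mathcal{L}$ as an operator on the padded grid has $\ell_\infty\!\to\!\ell_\infty$ norm at most $2$ (diagonal entry $1$ plus four off-diagonal entries of absolute value $\frac14$), and push the $k-1$ bound through one more factor of $\mathcal{L}$, which multiplies the constant by $2$ and turns $2^{k-1-3}$ into $2^{k-3}$.

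The step I expect to be the main obstacle is making the $k=1$ Taylor estimate fully rigorous and isolating the exact scalar $(1-\sin\gamma-\cos\gamma)$: one has to handle the resampling of $f$ at the rotated neighbour locations carefully, keep track of which increments are along $a$ versus $b$, bound the remainder uniformly in $\gamma$ over the compact angular range, and absorb the cross/mixed terms into the stated $o(\Delta a^2)+o(\Delta b^2)$ without hiding $\gamma$-dependence in them. Once that single geometric computation is pinned down, the reduction to it and the inductive amplification by the operator norm of $\mathcal{L}$ are routine, and the final bound~\eqref{eq:filt_diff_rot} follows by summing the geometric-in-$k$ contributions against $|\alpha_k|$.
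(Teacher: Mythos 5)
Your plan follows essentially the same route as the paper's proof: you decompose $\mathcal{F}$ into its monomial (``trivial'') terms $\alpha_k\mathcal{L}^k$ and apply the triangle inequality, establish the $k=1$ case by Taylor-expanding the $5$-point stencil at the rotation center and resampling $f$ at the rotated neighbour positions to extract the $(1-\sin\gamma-\cos\gamma)$ factor and the second-derivative bound $\bar{\mathcal{Z}}$, and then amplify by induction using the fact that one application of $\mathcal{L}$ (diagonal $1$ plus four weights of $1/4$) at most doubles the pointwise gap, yielding $2^{k-3}$ — exactly the paper's base case via bilinear interpolation and its inductive step counting the four neighbours. The step you flag as the main obstacle is indeed where the paper invests its effort (Taylor expansion plus bilinear interpolation of the rotated pattern), so the proposal is correct in approach and matches the paper's argument.
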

	
	\begin{proof}[Proof of Theorem \ref{t:im_rot}]
		
		
		
		We start the proof with the observation that any polynomial filter $\mathcal{F}$ of degree $N$ is a sum of the \emph{trivial} polynomial filters $\hmF_k$ of degrees $k = [0..N]$, defined as:
			\begin{equation}
			\hmF_k \df \alpha_k \mathcal{L}^k y \;.
			\end{equation}
			\noindent
			Therefore to prove Eq.~(\ref{eq:filt_diff_rot}), we need to show that
		\begin{equation}
		\Delta_{\hmF_k} (v) \leq 2^{k-3} \norm{\alpha_k(1 - \sin\gamma - \cos\gamma) \bar{\mathcal{Z}}}, \forall k, 
		\label{eq:filt_diff_rot_triv}
		\end{equation}
		\noindent
		where $\Delta_{\hmF^k} (v)$ is the absolute difference between $\hmF^k(y_g)$ and $\hmF^k(y)$. Then $\Delta_{\mF} (v)$ can be eventually approximated using the triangle inequality as 
		\begin{equation}
		\Delta_\mF (v) \leq \sum_{k=0}^{N} \Delta_{\hmF^k} (v)
		\label{eq:filt_diff_rot_sum}
		\end{equation}
		To prove Eq.~(\ref{eq:filt_diff_rot_triv}) we use the induction method.
		\begin{figure}
			\centering
			\includegraphics[width=\linewidth]{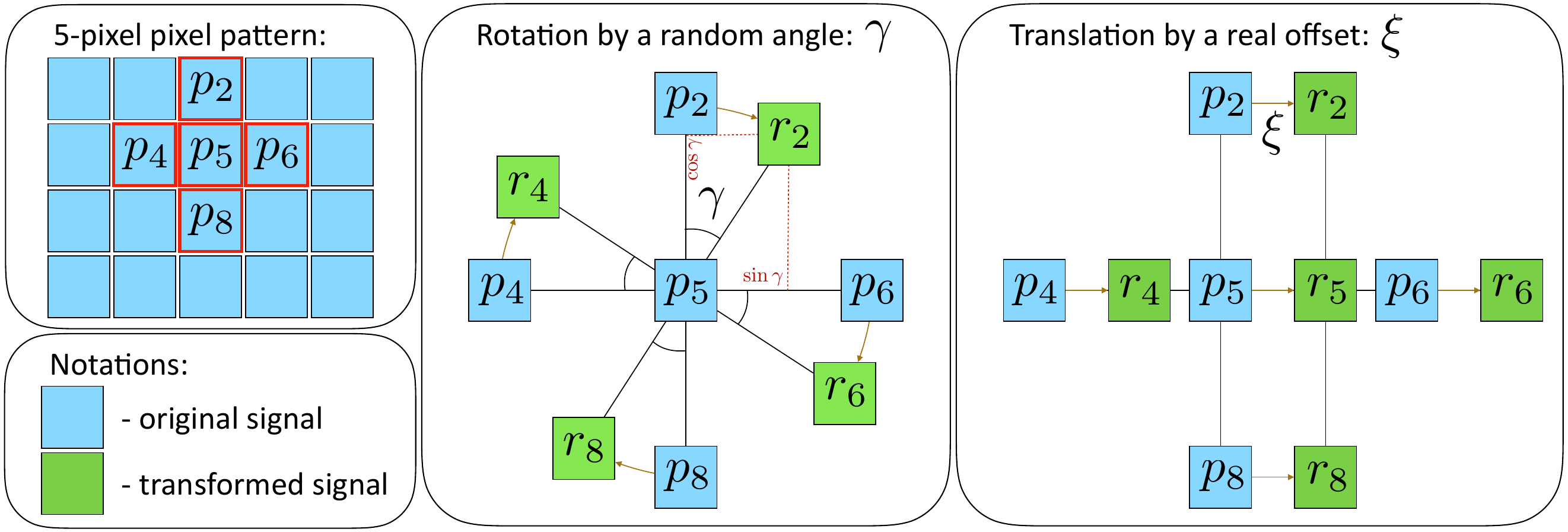} 
			\caption{[\textsc{Left}] Illustration of a 5-pixel image pattern that the polynomial filter $\mathcal{F}$ operates on, with $p_i, i \in [2,4,5,6,8]$ being the respective pixels intensities. [\textsc{Middle}] The rotation from the original image signal $y$ (blue rectangles) to the transformed image signal $y_g$ (green rectangles), after applying the transformation $g_\gamma$ around pixel $p_5$. The points $r_i, i \in [2,4,6,8]$ schematically show how the position of $p_i$ change after applying $g_\gamma$. [\textsc{Right}] The translation from the original to the transformed image signal, after applying the transformation $g_\xi$ by a real number of pixel intensities $\xi$.}
			\label{fig:pattern}
		\end{figure} 

		\textbf{Base case:} We first show that for any $\hmF$ of degree $0$ and~$1$ Eq.~(\ref{eq:filt_diff_rot_triv}) is valid. In the case of a polynomial filter of a $0$ degree, $\Delta_{\hmF^0}  = 0$, as
		\begin{equation}
		g_\gamma\left(\hmF^0 (y)\right) \df g_\gamma(\alpha_0 y) = \alpha_0 g_\gamma(y) = \hmF^0 (g_\gamma(y)).
		\label{eq:filt_diff_rot_zero_deg}
		\end{equation}
		\noindent
		Then, as depicted by Eq.~(\ref{eq:filtered_signal}), for a given node $v_r \in G$, $\hmF^1$ operates on the neighborhood of $v_r$, which can be represented as an image pattern, depicted by Fig.~\ref{fig:pattern}. For simplicity, we prove for the 4-nn graph case, however, the proof can be easily extended to the case of other regular graphs.
		For the sake of simplicity, we denote by $p_j$ the values of $y$ in the nodes of this pattern as follows:
		\begin{equation}
		p_j \df f(a_j,b_j) = y(v_j), \; \forall v_j: j \in [2,4,5,6,8] \;,
		\end{equation}
		\noindent
		where $(a_j, b_j)$ are the pixel coordinates of the node $v_j$. Assuming that $f$ is differentiable (as it is generally done for natural images) we can express $p_j$ using a Taylor approximation as
		\begin{equation}
		\small
		\begin{bmatrix}
		p_2 \\ p_6 \\ p_4 \\ p_8 \\
		\end{bmatrix} = 
		\mathcal{I} p_5 - 
		\begin{bmatrix}
		- \partial_b f(a_5, b_5) \Delta b \\
		- \partial_a f(a_5, b_5) \Delta a \\
		\partial_a f(a_5 - \Delta a, b_5) \Delta a\\
		\partial_b f(a_5, b_5- \Delta b)  \Delta b\\
		\end{bmatrix} +
		\begin{bmatrix}
		R_2(b_2) \\ R_2(a_6) \\ R_2(a_4) \\ R_2(b_8) \\
		\end{bmatrix} \;,
		\label{eq:points_o}
		\end{equation}
		\noindent
		where $\mathcal{I} = [1,1,1,1]^\intercal: \partial_a f, \partial_b f$ are the partial derivatives:
		\begin{equation}
		\partial_a f(a,b) \df \frac{\partial f(a, b)}{\partial a}\;, \quad \partial_b f(a,b) \df \frac{\partial f(a, b)}{\partial b} \;,
		\end{equation}
		and $R_2(\cdot)$ is the remainder term of the Taylor expansion
		, which reads:
		\begin{equation}
		\begin{array}{rll}
		R_2(a_k) &= o(|a_k-a_5|^2) &= o(\Delta a^2) \;, k \in [4,6] \;, \\
		R_2(b_k) &= o(|b_k-b_5|^2) &= o(\Delta b^2) \;, k \in [2,8] \;.\\
		\end{array}
		\end{equation}
		\noindent
		We further denote the new node values of the rotated pattern $g_\gamma(y)$, depicted by Fig~\ref{fig:pattern}, as follows:
		\begin{equation}
		r_j \df g_\gamma(y)(v_j),\;  j \in [2,4,6,8].
		\label{eq:rj}
		\end{equation}
		\noindent
		We can then approximate the values of $r_j$ using bilinear interpolation~\cite{bb:Szeliski10} as illustrated by Fig.~\ref{fig:pattern}. This allows us to express the values $r_j$ of each node of the rotated pattern based on the points $p_j$ of the original pattern, which results in:
		\begin{equation}
		\small
		\begin{bmatrix}
		r_2 \\ r_4 \\ r_6 \\ r_8 \\
		\end{bmatrix} = 
		\mathcal{I}  r_5 - \mathcal{A} \sin \gamma 
		- \mathcal{B} \cos \gamma 
		+ o(\Delta a^2) + o(\Delta b^2),
		%
		%
		%
		\label{eq:points_r}
		\end{equation} 
		\noindent
		where
		\begin{equation}
		\small
		\mathcal{A} = 
		\begin{bmatrix}
		- \partial_a f(a_5, b_5)\Delta a \\
		- \partial_b f(a_5, b_5)\Delta b \\
		\partial_b f (a_5, b_5-\Delta b)\Delta b \\
		\partial_a f (a_5-\Delta a, b_5)\Delta a \\
		\end{bmatrix}\;
		\mathcal{B} = 
		\begin{bmatrix}
		- \partial_b f (a_5, b_5)\Delta b \\
		\partial_a f(a_5-\Delta a, b_5)\Delta a \\
		- \partial_a f (a_5, b_5)\Delta a \\
		\partial_b f (a_5, b_5-\Delta b)\Delta b \\
		\end{bmatrix}
		\label{eq:points_r_notations}
		\end{equation}
		%
		%
		%
		
		Given the introduced notations we can compute the response of a \emph{trivial} filter $\hmF^1$ at the graph node $v_5$ as:
		\begin{equation}
		\small
		\hmF^1(y)(v_5) = - \frac{\alpha_1 }{4} \mathcal{Z}(v_5)\;,
		\label{eq:response_pat_o}
		\end{equation}

		\noindent
		where $\mathcal{Z}$ reflects the smoothness of the image signal $y$ and reads:
		\begin{equation}
		\small
		\begin{aligned}
		\mathcal{Z}(v_5) & =
		\begin{bmatrix}
		\partial_a f (a_5, b_5)  - \partial_a f (a_5-\Delta a, b_5)\\
		\partial_b f (a_5,b_5)  - \partial_b f (a_5,b_5-\Delta b)\\
		\end{bmatrix}^{\intercal}
		\begin{bmatrix}
		\Delta a \\
		\Delta b \\
		\end{bmatrix} \\
		 & + o(\Delta a^2) + o(\Delta b^2)\;.
		\end{aligned}
		\end{equation}
		\noindent
		Similarly, based on Eqs.~(\ref{eq:points_r}) and~(\ref{eq:points_r_notations}) we can compute the response of the same filter $\hmF^1$ applied to the rotated signal $y_g = g_{\gamma}(y)$ at the node $v_5$ as:
		\begin{equation}
		\hmF^{1}(y_g)(v_5) = - \frac{\alpha_1}{4} (\sin\gamma + \cos\gamma) \mathcal{Z}(v_5)
		\label{eq:response_pat_r}
		\end{equation}
		\noindent
		We can then calculate the difference between the filter responses presented in Eqs.~(\ref{eq:response_pat_o}) and~(\ref{eq:response_pat_r}) as follows:
		\begin{equation}
		\Delta_{\hmF^1}(v_5) = \frac{\alpha_1}{4} (1 - \sin\gamma - \cos\gamma) \mathcal{Z}(v_5).
		\label{eq:diff_resp}
		\end{equation} 
		%
		In order to compute $\Delta_{\hmF^1}$ in a general case (that is for an arbitrary rotation $\gamma$), we can rewrite $\mathcal{Z}$ as:
		\begin{equation}
		\mathcal{Z}(v_5) =
		\begin{bmatrix}
		\partial_a^2 f (a_5, b_5)\\
		\partial_b^2 f (a_5,b_5)\\
		\end{bmatrix}^{\intercal}
		\begin{bmatrix}
		\Delta a^2 \\
		\Delta b^2 \\
		\end{bmatrix} + o(\Delta a^2) + o(\Delta b^2),
		\label{eq:sec_der_base}
		\end{equation}
		\noindent
		which for any node $v$ in $G$ has its upper bound $\bar{\mathcal{Z}}$:
		%
		\begin{equation}
		\small
		\norm{\mathcal{Z}(v)} \leq \bar{\mathcal{Z}} = 
		\max_{0 \leq a,b \leq 1} \left|
		\begin{bmatrix}
		\partial_a^2 f (a,b)\\
		\partial_b^2 f (a,b)\\
		\end{bmatrix}
		\right|^{\intercal}
		\begin{bmatrix}
		\Delta a^2 \\
		\Delta b^2 \\
		\end{bmatrix} + o(\Delta a^2) + o(\Delta b^2) \;.
		\label{eq:upper_bound}
		\end{equation}
		\noindent
		Based on Eqs.~(\ref{eq:diff_resp}) and~(\ref{eq:upper_bound}) we obtain the following condition:
		\begin{equation}
		\Delta_{\hmF^1}(v) \leq 2^{-2}\norm{\alpha_1(1 - \sin\gamma - \cos\gamma) \bar{\mathcal{Z}}}, \forall v \in G \;,
		\label{eq:filt_rot_single_fin}
		\end{equation}
		\noindent
		for any \emph{trivial} polynomial filter $\mathcal{F}$ of degree $1$ and any image rotation $g_\gamma$, which concludes the proof of the base case.

		\textbf{Inductive step:} 
		%
		We first denote the polynomial filter of degree $M$ by $\hmF^{M} = \alpha_M \mathcal{L}^{M} y$. Then we assume that Eq.~(\ref{eq:filt_diff_rot_triv}) is satisfied for a filter $\hmF^{M-1} = \alpha_M L^{M-1} y$ of degree $M-1$ for every node $v$ in graph $G$, which brings us to the following inequality:
		\begin{equation}
		\Delta_{\hmF^{M-1}} (v) < 2^{(M-1)-3}\norm{\alpha_M (1 - \sin\gamma - \cos\gamma) \bar{\mathcal{Z}}}\;,
		\label{eq:rot_inductive_step}
		\end{equation}
		\noindent
		where $\Delta_{\hmF^{M-1}}$ is the difference between the responses of filter $\hmF^{M-1}$ applied to the original and transformed signals. For the sake of simplicity we denote the right hand side of Eq.~(\ref{eq:rot_inductive_step}) by $\epsilon_\gamma$. Then our goal is to prove that 
		\begin{equation}
		\norm{\Delta_{\hmF^M} (v)} \leq 2 \epsilon_\gamma, \quad \forall v \in G\;.
		\end{equation}
		To do so, we use the same representation of the filter $\hmF^M$, as in Eq.~(\ref{eq:ind_f}), which allows us to rewrite $\Delta_{\hmF^M}$ as:
		\begin{equation}
		\Delta_{\hmF^{M}} = \mathcal{L} \hmF^{M-1}(y_g) - g_\gamma(\mathcal{L}\hmF^{M-1}(y)) \;,
		\end{equation}
		\noindent
		where $\mathcal{L}$ is the Laplacian matrix. Further, due to the linearity of the isometric transformation $g_\gamma$, we can rewrite $\Delta_{\hmF^{M}}$ as
		\begin{equation}
		\small
		\Delta_{\hmF^{M}} = \mathcal{L} \left(\hmF^{M-1}(y_g) - g_\gamma( \hmF^{M-1}(y))\right) \df \mathcal{L} \left(\Delta_{\hmF^{M-1}}\right)\;,
		\label{eq:ind_rot}
		\end{equation}
		\noindent
		that can be expressed as
		%
		\begin{equation}
		\small
		\Delta_{\hmF^{M}} (v_j) = \Delta_{\hmF^{M-1}} (v_j) + \frac{1}{4} \sum\limits_{i: v_i \sim v_j} \Delta_{\hmF^{M-1}} (v_i), \; \forall v_j \in G.
		\label{eq:rot_fin_diff}
		\end{equation}
		\noindent 
			We can then apply triangle inequality to Eq.~(\ref{eq:rot_fin_diff}) and write:
		\begin{equation}
		\small
		\begin{aligned}
		\Delta_{\hmF^{M}} (v) & \leq \max_{v \in G} \Delta_{\hmF^{M}} (v) \leq \\
		& \leq \max_{v \in G} \left( \Delta_{\hmF^{M-1}}(v) + \frac{1}{4} \sum\limits_{i: v_i \sim v} \Delta_{\hmF^{M-1}} (v_i) \right)
		\end{aligned}
		\label{eq:rot_fin_diff_int}
		\end{equation}
		\noindent
		As every node $v$ has at most $4$ neighbors and given Eq.~({\ref{eq:rot_inductive_step}): $| \Delta_{\hmF^{M-1}} (v) | \leq \epsilon_\gamma, \forall v \in G$, we can rewrite Eq.~(\ref{eq:rot_fin_diff_int}) as}:
		\begin{equation}
		\Delta_{\hmF^{M}}(v) \leq \max_{v \in G} \Delta_{\hmF^{M}} (v) = \epsilon_\gamma + \frac{1}{4}(4 \epsilon_\gamma) = 2\epsilon_\gamma \;.
		\label{eq:ind_rot_fin}
		\end{equation}
		\noindent
		Based on Eqs.~(\ref{eq:rot_inductive_step})~and~(\ref{eq:ind_rot_fin}), we can then write the general condition on $\Delta_{\hmF^{M}}(v)$ for a \emph{trivial} polynomial filter of any degree $M$, for every node $v$ in $G$ as follows:
		\begin{equation}
		\Delta_{\hmF^{M}} (v) \leq 2^{M-3} \norm{\alpha_{M} (1 - \cos \gamma - \sin \gamma) \bar{\mathcal{Z}}} \;, 
		\label{eq:rot_fin}
		\end{equation}
		\noindent
		which is identical to Eq.~(\ref{eq:filt_diff_rot_triv}). We then obtain Eq.~(\ref{eq:filt_diff_rot}) by simply combining Eq.~(\ref{eq:rot_fin}) with Eq.~(\ref{eq:filt_diff_rot_sum}), which concludes the proof of the theorem.
		%
	\end{proof}
	
	It is worth noting that in the special case of $\gamma$ being equal to one of the following values: $[0, \pi/2, \pi, 3\pi/2]$, the general isometric transformation $g_\gamma$ becomes a graph isometric transformation, which is thoroughly discussed in Section~\ref{sec:90deg}. This essentially means that for any signal $y$ the difference between filter responses is $0$. As we can see from Eq.~(\ref{eq:filt_diff_rot}), this is indeed true, as $\Delta_\mathcal{F} = 0$ independently of $\bar{\mathcal{Z}}$. 
	
	\subsubsection{Translation}
	\label{sec:trans}
	
	We now prove the quasi-equivariance of $\mathcal{F}$ with respect to an arbitrary image translation $g_\xi$. We start with a simple observation that any arbitrary translation of an image signal by a real number of pixels can be decomposed into a graph isometric one (i.e translation by an integer number of pixels) and an image translation by $\xi$ that is less than $1$ pixel. As illustrated in Section~\ref{sec:90deg}, polynomial filters $\mathcal{F}$ are equivariant with respect to the graph isometric translation, therefore, in this section we prove their quasi-equivariance with respect to translation by $\xi \in [-\frac{1}{2},\frac{1}{2})$ pixels.


	 Similarly to Section~\ref{sec:rot}, let $y_g = g_\xi(y)$ denote the translated version of the signal $y$ by $\xi \in [-\frac{1}{2},\frac{1}{2})$. Let us then denote the equivariance gap by 
	 \begin{equation}
	 \Delta_\mathcal{F} \df \left| \mathcal{F}\big(g_{\xi} (y)\big)(v) - \bar{g}_{\xi}\big(\mathcal{F}(y)(v)\big) \right| ,
	 \label{eq:delta_F_quasi_tr}
	 \end{equation}
	 where $\bar{g}_{\xi}$ is graph isometric transformation that is the closest to $g$. Due to the fact that $\mF$ is an isotropic filter and as we are considering rotation $\gamma \in [-\frac{\pi}{4}, \frac{\pi}{4}]$ around vertex $v_r$, where $\mF$ is applied], we obtain the following equality $\bar{g}_{\xi}\big(\mF(y)(v)\big) \equiv \mF(y)(v)$ and rewrite Eq.~(\ref{eq:delta_F_quasi_tr}) as
	 \begin{equation}
	 \Delta_\mathcal{F} \df \left| \mathcal{F}\big(g_{\xi} (y)\big)(v) - \mathcal{F}(y)(v) \right|, \forall v.
	 \label{eq:equivariance_ref_trans_notations_smpl}
	 \end{equation} 
	\noindent
	and prove the following theorem.
	
	
	
	\begin{theorem}
		The absolute value of the difference $\Delta_\mathcal{F}(v)$ between signals $\mathcal{F}(g_\xi (y))$ and $\mathcal{F}(y)$ for any node $v$ defined on the graph $G$, satisfies the following inequality:
		\begin{equation}
		\Delta_\mathcal{F} (v) \leq \left( \sum_{k=1}^{M} \norm{a_k} 2^{k-3} \right) \norm{\bar{\mathcal{Z}} + o(\Delta a^2) + o(\Delta b^2)}\;,
		\label{eq:filt_diff}
		\end{equation} 
		\noindent
		where $M$ is the degree of $\mathcal{F} = \sum_{k=0}^M \alpha_k \mathcal{L}^k y$ and $\bar{\mathcal{Z}}$ depends on the third partial derivatives of the image signal $y \df f(a,b)$ as
		\begin{equation}
		\bar{\mathcal{Z}} = 
		\max_{0 \leq a,b \leq 1} \left|
		\begin{bmatrix}
		\partial^3_a f(a,b) \\
		\partial^3_b f(a,b) \\
		\partial_a\partial^2_b f(a,b) \\
		\partial_b\partial^2_a f(a,b) \\
		\end{bmatrix}
		\right|^{\intercal}
		\begin{bmatrix}
		\Delta a^3 \\
		\Delta b^3 \\
		\Delta b^2 \Delta a \\
		\Delta a^2 \Delta b \\
		\end{bmatrix} \\
		\label{eq:trans_z}
		\end{equation}
		\noindent
		with $\Delta a, \Delta b$ being the distances between the graph nodes in horizontal and vertical directions of the 2D image, respectively.
		\label{t:im_trans}
	\end{theorem}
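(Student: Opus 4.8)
The plan is to follow the architecture of the proof of Theorem~\ref{t:im_rot} almost verbatim, replacing the sub-pixel rotation $g_\gamma$ by a sub-pixel translation $g_\xi$, $\xi\in[-\tfrac12,\tfrac12)$, and carrying the Taylor expansion one order further. First I would decompose the polynomial filter $\mathcal{F}=\sum_{k=0}^{M}\alpha_k\mathcal{L}^k y$ into the \emph{trivial} filters $\hmF^{k}\df\alpha_k\mathcal{L}^k y$ and use the triangle inequality to get $\Delta_{\mathcal{F}}(v)\le\sum_{k=0}^{M}\Delta_{\hmF^{k}}(v)$, exactly as in Eq.~(\ref{eq:filt_diff_rot_sum}) but with $g_\xi$ in place of $g_\gamma$. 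It then suffices to establish, by induction on $k$, the per-degree estimate
\begin{equation}
\Delta_{\hmF^{k}}(v)\le 2^{k-3}\,\norm{\alpha_k\big(\bar{\mathcal{Z}}+o(\Delta a^2)+o(\Delta b^2)\big)},\qquad\forall v\in G,
\end{equation}
with $\bar{\mathcal{Z}}$ as in Eq.~(\ref{eq:trans_z}); summing this over $k=1,\dots,M$ immediately yields Eq.~(\ref{eq:filt_diff}). Note also that, since $\xi\in[-\tfrac12,\tfrac12)$, the closest graph isometric transformation $\bar g_\xi$ is the identity on the nodes, so $\bar g_\xi(\mathcal{F}(y))(v)\equiv\mathcal{F}(y)(v)$ and we may work with the simplified gap of Eq.~(\ref{eq:equivariance_ref_trans_notations_smpl}).

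\textbf{Base case.} For $k=0$ one has $\Delta_{\hmF^{0}}=0$ because $g_\xi$ acts linearly on signals, as in Eq.~(\ref{eq:filt_diff_rot_zero_deg}). For $k=1$ I would again restrict attention to the $5$-pixel cross pattern $\{p_2,p_4,p_5,p_6,p_8\}$ of Fig.~\ref{fig:pattern} (right panel), express the translated node values $r_j\df g_\xi(y)(v_j)$ by bilinear interpolation of the $p_j$ with the horizontal and vertical components $\xi_a,\xi_b$ of the shift handled separately, and Taylor-expand $f$ about $(a_5,b_5)$ to \emph{third} order (with remainder absorbed into $o(\Delta a^2)+o(\Delta b^2)$). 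The key difference with the rotation case is that here the cancellation in the Laplacian response reaches one further order: for a symmetric $4$-neighbourhood, $\mathcal{L}$ annihilates constants and affine parts and sends any quadratic to a constant independent of the node, so a pure sub-pixel translation does not change the response up to the quadratic part, and the leading discrepancy between $\hmF^{1}(y_g)(v_5)$ and $\hmF^{1}(y)(v_5)$ is driven by the \emph{cubic} part of the expansion. Collecting these terms produces a combination of $\partial_a^3 f,\partial_b^3 f,\partial_a\partial_b^2 f,\partial_b\partial_a^2 f$ weighted by $\Delta a^3,\Delta b^3,\Delta b^2\Delta a,\Delta a^2\Delta b$ respectively, and multiplied by the shift components $\xi_a,\xi_b$; maximising over $0\le a,b\le1$ and bounding $|\xi_a|,|\xi_b|<\tfrac12$ gives $\Delta_{\hmF^{1}}(v)\le 2^{1-3}\norm{\alpha_1(\bar{\mathcal{Z}}+o(\cdot))}$, which is the required $k=1$ estimate. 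This is precisely the vector $\bar{\mathcal{Z}}$ of Eq.~(\ref{eq:trans_z}).

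\textbf{Inductive step.} This is essentially identical to the inductive step of Theorem~\ref{t:im_rot}. Writing $\hmF^{M}=\mathcal{L}\,\hmF^{M-1}$ (with the leading coefficient relabelled as in the passage preceding Eq.~(\ref{eq:rot_inductive_step})) and using the linearity of $g_\xi$, one gets $\Delta_{\hmF^{M}}=\mathcal{L}\big(\Delta_{\hmF^{M-1}}\big)$ exactly as in Eq.~(\ref{eq:ind_rot}), and then the vertex-domain expansion of Eq.~(\ref{eq:rot_fin_diff}). Applying the triangle inequality, using that every node has at most $4$ neighbours, and invoking the inductive hypothesis $|\Delta_{\hmF^{M-1}}(v)|\le\epsilon_\xi\df 2^{(M-1)-3}\norm{\alpha_M(\bar{\mathcal{Z}}+o(\cdot))}$ gives $\Delta_{\hmF^{M}}(v)\le\epsilon_\xi+\tfrac14(4\epsilon_\xi)=2\epsilon_\xi=2^{M-3}\norm{\alpha_M(\bar{\mathcal{Z}}+o(\cdot))}$, closing the induction; summing over the trivial filters then gives Eq.~(\ref{eq:filt_diff}).

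\textbf{Expected main obstacle.} The only genuinely new bookkeeping is the $k=1$ base case: one has to push the bilinear-interpolation and Taylor expansions to third order and verify carefully that all zeroth-, first-, and second-order contributions to the two filter responses cancel, leaving only the third-derivative monomials — including the mixed ones $\Delta a^2\Delta b$ and $\Delta a\Delta b^2$, which do not appear in the rotation bound because there the cancellation stops at first order and the discrepancy is a pure second-derivative term. Extra care is needed because a general sub-pixel translation displaces the pattern both horizontally and vertically simultaneously, so the interpolation weights couple all four arms of the cross; treating $\xi_a,\xi_b$ componentwise and using $|\xi_a|,|\xi_b|<\tfrac12$ keeps the constants under control. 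Everything else — the decomposition into trivial filters, the degree-$0$ case, the passage $\hmF^{M}=\mathcal{L}\hmF^{M-1}$, and the neighbour-counting inductive step — transfers without change from the proof of Theorem~\ref{t:im_rot}.
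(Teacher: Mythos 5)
Your proposal matches the paper's proof in all essentials: the same decomposition of $\mathcal{F}$ into trivial filters $\hmF^k=\alpha_k\mathcal{L}^k y$ with the triangle inequality, the same induction on the degree with the base case established by a third-order Taylor expansion at the 5-pixel cross pattern (handling horizontal and vertical shift components separately and summing), and the same inductive step $\Delta_{\hmF^{M}}=\mathcal{L}(\Delta_{\hmF^{M-1}})$ with the at-most-4-neighbours count giving the factor $2$ per degree. The paper likewise arrives at the third-derivative vector of Eq.~(\ref{eq:trans_z}) by expanding the finite differences of first derivatives, so your account of where the extra order of cancellation comes from is consistent with its computation.
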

	
	\begin{proof}[Proof of Theorem~\ref{t:im_trans}]
		We follow similar steps as in the proof of Theorem~\ref{t:im_rot} and show that, for each \emph{trivial} filter $\hmF^k = \alpha_k \mathcal{L}^k y$, the following relation is valid:
		\begin{equation}
		\Delta_{\hmF^k} = 2^{k-3} \norm{\alpha_k (\bar{\mathcal{Z}} + o(\Delta a^2) + o(\Delta b^2))} \;.
		\label{eq:trans_triv_diff}
		\end{equation}
		\noindent
		Here $\Delta_{\hmF^k}$ is the difference between responses of $\hmF^k = \alpha_k \mathcal{L}^k y$ applied to the original signal $y$ and the transformed version $y_g$, and $\bar{\mathcal{Z}}$ is defined in Eq.~(\ref{eq:trans_z}). Then, based on Eq.~(\ref{eq:trans_triv_diff}) and triangle inequality, we can show that Eq.~(\ref{eq:filt_diff}) is valid for any polynomial filter $\mF$  that can be written as a sum of trivial filters.
		
		Similarly to Theorem~\ref{t:im_rot}, in order to prove Eq.~(\ref{eq:trans_triv_diff}), we use the induction method. Further, we use the fact that any random translation $g_\xi$ can be represented as a sum of the translations in horizontal and vertical directions.
		
		\textbf{Base case:} We need to prove that Eq.~(\ref{eq:trans_triv_diff}) is valid for any filter $\hmF$ of degree $0$ and $1$. The proof of the following equality: 			
		\begin{equation}
		g_\gamma\left(\hmF^0 (y)\right) \df g_\gamma(\alpha_0 y) = \alpha_0 g_\gamma(y) = \hmF^0 (g_\gamma(y)),
		\label{eq:filt_diff_rot_zero_deg_shift}
		\end{equation}
		is identical to the proof of Theorem~\ref{t:im_rot}, which validates the base case for the \emph{trivial} filter of degree 0. Then we need to show that Eq.~(\ref{eq:trans_triv_diff}) is valid for any \emph{trivial} filter $\hmF^{1}$ of degree $1$. To do so, we first compute the response of $\hmF^1(y) = \alpha_1 \mathcal{L} y$ at a random vertex $v_5 \in G$ with the coordinates $(a_5,b_5)$ (see Fig.~\ref{fig:pattern} (right)) as:
		\begin{equation}
		\hmF^1(y)(v_5) = \alpha_1\left(f(a_5,b_5) - \frac{1}{4}\sum_{i=[2,4,6,8]}f(a_i,b_i)\right) \;.
		\label{eq:response_mb}
		\end{equation}
		\noindent
		We then apply the same $\hmF^1$ to the transformed signal $y_{g}$ that is a version of the signal $y$, shifted horizontally  by $\xi$ pixels. Using a Taylor expansion, we obtain the following:
		\begin{equation}
		\small
		\begin{aligned}
		\hmF^1(y_g)(v_5) & = \alpha_1\left(f(a_5,b_5) - \frac{1}{4}\sum_{i=[2,4,6,8]}f(a_i,b_i)  + o(\Delta a^2)\right) + \\
		& + \frac{\alpha_1}{4}
		\begin{bmatrix}
		\partial_a f(a_5,b_5) \\ 
		\partial_a f(a_5-\Delta a, b_5) \\ 
		\partial_a f(a_5+\Delta a, b_5) \\ 
		\partial_a f(a_5, b_5+\Delta b) \\ 
		\partial_a f(a_5, b_5-\Delta b) \\ 
		\end{bmatrix}^\intercal
		\begin{bmatrix}
		4 \\ -1 \\ -1 \\ -1 \\ -1 \\
		\end{bmatrix} \Delta a \;.
		\end{aligned}
		\label{eq:response_mafter}
		\end{equation}
		\noindent
		Equipped with Eqs.~(\ref{eq:response_mb})~and~(\ref{eq:response_mafter}) we compute the difference between filter responses $\Delta_{\hmF^1}$
		at node $v_5$ in $G$ and obtain
		\begin{equation}
		\small
		\Delta_{\hmF^1}(v_5) = \frac{\alpha_1}{4}
		\begin{bmatrix}
		\partial_a f(a_5,b_5) \\ 
		\partial_a f(a_5-\Delta a, b_5) \\ 
		\partial_a f(a_5+\Delta a, b_5) \\ 
		\partial_a f(a_5, b_5+\Delta b) \\ 
		\partial_a f(a_5, b_5-\Delta b) \\ 
		\end{bmatrix}^\intercal
		\begin{bmatrix}
		4 \\ -1 \\ -1 \\ -1 \\ -1 \\
		\end{bmatrix} \Delta a + o(\Delta a^2) \;.
		\label{eq:trans_diff}
		\end{equation}
		After applying matrix multiplication and Taylor expansion, Eq.~(\ref{eq:trans_diff}) boils down to
		\begin{equation}
		\small
		\Delta_{\hmF^1}(v_5) = \frac{\alpha_1}{4}
		\begin{bmatrix}
		\partial^3_a f(a_5,b_5) \Delta a\\
		\partial_a\partial^2_b f(a_5,b_5) \Delta a\\
		\end{bmatrix}^\intercal
		\begin{bmatrix}
		\Delta a^2 \\
		\Delta b^2 \\
		\end{bmatrix} + o(\Delta a^2) + o(\Delta b^2)\;,\\
		\label{eq:horizontal_trans}
		\end{equation}
		\noindent
		which permits to write the following upper bound for all nodes $v$ in the graph $G$:
		\begin{equation}
		\small
		\begin{aligned}
		\Delta_{\hmF^1} (v) & \leq \max_{v \in G} \Delta_\mathcal{F} (v)  = o(\Delta a^2) + o(\Delta b^2) + \\
		& + 
		\max_{0 \leq a,b \leq 1} 
		\left| \frac{\alpha_1}{4}
		\begin{bmatrix}
		\partial^3_a f(a,b)\Delta a\\
		\partial_a\partial^2_b f(a,b) \Delta a\\
		\end{bmatrix}
		\right|^\intercal
		\begin{bmatrix}
		\Delta a^2 \\
		\Delta b^2 \\
		\end{bmatrix}. \\
		\end{aligned}
		\label{eq:response_del_trans_res}
		\end{equation}
		
		This concludes the derivation of the upper bound on $\left|\Delta_{\hmF^1} (v)\right|$ for an arbitrary horizontal translation $g_\xi$. We then perform the exact same steps for the vertical image translation and obtain:
		\begin{equation}
		\small
		\begin{aligned}
		\Delta_{\hmF^1} (v) & \leq \max_{v \in G} \Delta_\mathcal{F} (v)  = o(\Delta a^2) + o(\Delta b^2) + \\
		& + 
		\max_{0 \leq a,b \leq 1} 
		\left| \frac{\alpha_1}{4}
		\begin{bmatrix}
		\partial^3_b f(a,b)\Delta b\\
		\partial_b\partial^2_a f(a,b) \Delta b\\
		\end{bmatrix}
		\right|^\intercal
		\begin{bmatrix}
		\Delta b^2 \\
		\Delta a^2 \\
		\end{bmatrix}\;.\\
		\end{aligned}
		\label{eq:response_del_vtrans_res}
		\end{equation}
		\noindent
		Finally, for an arbitrary image translation we add  Eqs.~(\ref{eq:response_del_trans_res})~and (\ref{eq:response_del_vtrans_res}) and obtain Eq.~(\ref{eq:trans_triv_diff}), which proves the base case of the theorem.
		
		\textbf{Inductive step:} 
		To prove the inductive step of the theorem we need to show that for any filter $\hmF^M$ of degree $M$ and every node $v$ in $G$ the following holds:
			\begin{equation}
			\Delta_{\hmF^M} (v) \leq 2\epsilon_\xi\;,
			\end{equation}
			\noindent
			under the assumption that for a filter $\hmF^{M-1} = \alpha_M \mathcal{L}^{M-1} y$ of degree $M-1$, the following is valid:
			\begin{equation}
			\small
			\Delta_{\hmF^{M-1}} (v) \leq \epsilon_\xi, \; \epsilon_\xi = 2^{k-3} \norm{\alpha_M (\bar{\mathcal{Z}} + o(\Delta a^2) + o(\Delta b^2))}.
			\end{equation}
			\noindent
			The proof of this fact is identical to the inductive step of the Theorem~\ref{t:im_rot}, which concludes the proof of this theorem.
		
		%
	\end{proof}

	As we can see from Theorem~{\ref{t:im_trans}} polynomial filter quasi-equivariant to arbitrary image translation. Further, Eqs.~(\ref{eq:response_del_trans_res}) and (\ref{eq:response_del_vtrans_res}) show that the equivariance gap depends on the image properties, such as the maximum value of the third derivative of the input signal $y$, which is directly related to the smoothness of $y$. 
		
	\subsubsection{Discussion}
	\label{s:discussion}
	
		The Theorems~\ref{t:im_rot} and~\ref{t:im_trans} permit to obtain the following result.
	\begin{lemma}
		For a high resolution image signals $y$ and $y_g$ (i.e., the distances between graph nodes $\Delta a, \Delta b$ are small) the difference between the responses of the polynomial filters $\Delta_\mathcal{F}$, defined in Eqs.~(\ref{eq:filt_diff_rot}) and~(\ref{eq:filt_diff}) is smaller than (or equal to) the difference of the responses of the same filter $\mathcal{F}$, applied to the lower resolution versions of the same images $y$ and $y_g$. 
	\end{lemma}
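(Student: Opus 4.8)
The plan is to read the claim directly off the explicit upper bounds established in Theorems~\ref{t:im_rot} and~\ref{t:im_trans}, interpreting ``the difference $\Delta_\mathcal{F}$ defined in Eqs.~(\ref{eq:filt_diff_rot}) and~(\ref{eq:filt_diff})'' as the guaranteed equivariance gap given by the right-hand sides of those inequalities. Fix the polynomial filter $\mathcal{F} = \sum_{k=0}^M \alpha_k \mathcal{L}^k$ and the transformation (a rotation by $\gamma$, or a translation by $\xi$), and let the same underlying continuous image $y \df f(a,b)$ be sampled at a \emph{high} resolution with node spacings $(\Delta a,\Delta b)$ and at a \emph{lower} resolution with spacings $(\Delta a',\Delta b')$, where $\Delta a' \geq \Delta a$ and $\Delta b' \geq \Delta b$. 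By Eqs.~(\ref{eq:filt_diff_rot}) and~(\ref{eq:filt_diff}), in both regimes $\Delta_\mathcal{F}(v)$ is bounded by a product of a \emph{resolution-independent} factor --- namely $\sum_{k=1}^M \norm{\alpha_k}\,2^{k-3}$, times $\norm{1-\sin\gamma-\cos\gamma}$ in the rotation case --- and the quantity $\bar{\mathcal{Z}}$ of Eq.~(\ref{eq:sec_der}), respectively Eq.~(\ref{eq:trans_z}). Hence it suffices to show that $\bar{\mathcal{Z}}$ is monotone non-decreasing in the node spacings.

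The key step is the observation that, in each of Eqs.~(\ref{eq:sec_der}) and~(\ref{eq:trans_z}), $\bar{\mathcal{Z}}$ is a maximum over $0 \leq a,b \leq 1$ of a sum of terms, each of which is the absolute value of a (second- or third-order) partial derivative of $f$ evaluated at $(a,b)$ multiplied by a monomial in $\Delta a,\Delta b$ --- with exponents $(2,0),(0,2)$ in the rotation case and $(3,0),(0,3),(1,2),(2,1)$ in the translation case --- plus the Taylor-remainder terms $o(\Delta a^2)+o(\Delta b^2)$. For a fixed point $(a,b)$ each such term is non-negative and non-decreasing in both $\Delta a$ and $\Delta b$, because its coefficient (an absolute partial derivative of $f$) depends only on the fixed image $f$ and is unchanged by refining or coarsening the sampling grid, while the monomials grow with the spacings. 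A pointwise maximum of a family of functions each non-decreasing in $(\Delta a,\Delta b)$ is itself non-decreasing, so $\bar{\mathcal{Z}}(\Delta a,\Delta b) \leq \bar{\mathcal{Z}}(\Delta a',\Delta b')$. Combined with the resolution-independence of the prefactors, this yields, for every node $v$, that the bound on $\Delta_\mathcal{F}(v)$ at the high resolution is no larger than the one at the lower resolution, which is exactly the assertion of the lemma.

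The only delicate point --- and the place where I would be most careful --- is the precise meaning of ``a lower resolution version of the same image''. If the coarse image is obtained by plain subsampling, then $f$ and all of its partial-derivative maxima are literally the same and only the spacings grow, so the monotonicity argument applies verbatim; if an anti-aliasing low-pass is applied before subsampling, the relevant derivative maxima can only decrease, which only strengthens the inequality. One should also verify that in the intended ``high resolution'' regime the $o(\Delta a^2)+o(\Delta b^2)$ remainders are dominated by the leading monomials, so that they do not interfere with the comparison; this is immediate since these remainders vanish faster than the leading terms as $\Delta a,\Delta b \to 0$.
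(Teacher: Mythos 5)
Your argument is correct and is essentially the paper's own (the paper simply states that the lemma ``naturally follows'' from Theorems~\ref{t:im_rot} and~\ref{t:im_trans}): you read the claim off the right-hand sides of Eqs.~(\ref{eq:filt_diff_rot}) and~(\ref{eq:filt_diff}), noting that the prefactor is resolution-independent and that $\bar{\mathcal{Z}}$ is monotone non-decreasing in $\Delta a,\Delta b$. Your added care about interpreting the gap via its guaranteed bound and about what ``lower resolution version'' means only makes explicit what the paper leaves implicit.
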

	The proof of this naturally follows from the Theorems~\ref{t:im_rot} and~\ref{t:im_trans}. Based on these theorems, we can derive a formal condition on the resolution of the image signal $y$ that guarantees that the response of any polynomial filter $\mathcal{F}$ of a given degree is quasi-equivariant to random image rotations. 
	
	\begin{lemma}
		For a polynomial filter $\mathcal{F}$ of a given degree there exist positive values $\Delta a$ and $\Delta b$, which define the resolution of an image signal $y=f(a,b)$, such that the equivariance gap $\Delta_\mathcal{F}$ between $\mathcal{F}(y)$ and its rotated version $\mathcal{F}(y_g)$ is lower then a predefined constant $\epsilon_\text{def}$.
		\label{l:im_res}
	\end{lemma}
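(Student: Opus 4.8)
\textbf{Proof proposal for Lemma~\ref{l:im_res}.}
The plan is to obtain the lemma directly from the quantitative estimate of Theorem~\ref{t:im_rot}, namely
\[
\Delta_\mathcal{F}(v) \leq \left(\sum_{k=1}^M \norm{\alpha_k}\, 2^{k-3}\right)\norm{(1-\sin\gamma-\cos\gamma)\,\bar{\mathcal{Z}}},\quad \forall v\in G,
\]
by bounding each of the three factors on the right separately: the first by a fixed constant depending only on the filter, the second by a universal constant independent of the resolution, and the third by a quantity that vanishes as $\Delta a,\Delta b\to 0$. Then, given the target $\epsilon_\text{def}$, I would simply choose $\Delta a,\Delta b$ small enough that the product of the three bounds falls below $\epsilon_\text{def}$.

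Concretely, first set $C_\mathcal{F}\df\sum_{k=1}^M\norm{\alpha_k}\,2^{k-3}$; since the degree $M$ is fixed and the coefficients $\alpha_k$ are fixed once $\mathcal{F}$ is chosen, $C_\mathcal{F}$ is a finite constant depending only on $\mathcal{F}$. For the angular factor, I would recall that any rotation decomposes into an integer number of $\pi/2$-rotations—to which $\mathcal{F}$ is \emph{exactly} equivariant by Theorem~\ref{t:pol_fil}—plus a residual rotation by $\gamma\in[-\tfrac{\pi}{4},\tfrac{\pi}{4})$, so it suffices to treat that range, on which $\norm{1-\sin\gamma-\cos\gamma}\leq 1\df C_\gamma$ uniformly in $\gamma$. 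For the last factor I would invoke the standing smoothness hypothesis on the image: if $f$ is twice continuously differentiable on the compact square $[0,1]^2$, then $D_2\df\max_{0\leq a,b\leq 1}\big(\norm{\partial_a^2 f(a,b)}+\norm{\partial_b^2 f(a,b)}\big)$ is finite, and Eq.~(\ref{eq:upper_bound}) gives $\norm{\bar{\mathcal{Z}}}\leq D_2\max(\Delta a^2,\Delta b^2)+o(\Delta a^2)+o(\Delta b^2)$, which tends to $0$ as $\Delta a,\Delta b\to 0$.

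Combining, $\Delta_\mathcal{F}(v)\leq C_\mathcal{F}\,C_\gamma\big(D_2\max(\Delta a^2,\Delta b^2)+o(\Delta a^2)+o(\Delta b^2)\big)$ for every $v$; given $\epsilon_\text{def}>0$ I would use the definition of the little-$o$ terms to absorb them (for $\Delta a,\Delta b$ small enough the remainder is at most $\max(\Delta a^2,\Delta b^2)$), so it is enough to require $C_\mathcal{F}\,C_\gamma\,(D_2+1)\max(\Delta a^2,\Delta b^2)<\epsilon_\text{def}$, i.e. any resolution with $\max(\Delta a,\Delta b)<\sqrt{\epsilon_\text{def}/\big(C_\mathcal{F} C_\gamma (D_2+1)\big)}$ works, and such positive $\Delta a,\Delta b$ clearly exist. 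The only genuinely delicate point—and the one I would state explicitly rather than wave away—is the regularity hypothesis that makes $D_2$ finite: without a uniform bound on the second derivatives of $f$ the statement is vacuous, and the $o(\cdot)$ terms must be controlled through their asymptotic definition rather than dismissed by fiat. Finally, running the identical argument with Theorem~\ref{t:im_trans} (third partial derivatives of $f$, hence a bound $D_3$, in place of $D_2$) yields the analogous resolution condition for arbitrary translations, and thus for any isometric transformation.
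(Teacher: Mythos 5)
Your proposal is correct and follows essentially the same route as the paper: both derive the resolution condition by inverting the quantitative bound of Theorem~\ref{t:im_rot} (resp.\ Theorem~\ref{t:im_trans}), i.e.\ bounding the filter-dependent constant, the angular factor on $\gamma\in[-\tfrac{\pi}{4},\tfrac{\pi}{4})$, and the derivative term $\bar{\mathcal{Z}}$, and then choosing $\Delta a,\Delta b$ small enough that the product falls below $\epsilon_\text{def}$. The only differences are bookkeeping: the paper carries out the computation explicitly for degree $M=1$ (splitting $\epsilon_\text{def}$ into the $a$- and $b$-contributions and solving for $\Delta a,\Delta b$ with the factor $(1-\sin\gamma-\cos\gamma)$ kept explicit), whereas you work directly with the general-degree bound, absorb the angular factor into a uniform constant, and state the $C^2$ regularity assumption explicitly, which is a slightly cleaner packaging of the same argument.
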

	\begin{proof}[Proof of Lemma \ref{l:im_res}]
		Our goal is to derive an upper bound on the values of $\Delta a$ and $\Delta b$ such that $\Delta_\mathcal{F}$ is lower then a predefined constant $\epsilon_\text{def} > 0$. The values $\Delta a$ and $\Delta b$ are directly related to the resolution of the image $y$ as they define the distance between the neighboring nodes of the graph $G$, which is small for high resolution images and large for the low resolution ones. Here we show the derivation of the upper bounds on $\Delta a$ and $\Delta b$ for a polynomial filter $\mathcal{F}$ of degree $1$. Similar proof can be done for the polynomial filters of any arbitrary degree.
		
		Let us introduce the following notations:
		\begin{equation}
		\begin{aligned}
		\epsilon_a \df \left(\partial^2_a f (a_5,b_5) \Delta a^2 + o(\Delta a^2)\right)(1-\sin\gamma -\cos\gamma), \\
		\epsilon_b \df \left(\partial^2_b f (a_5,b_5) \Delta b^2 + o(\Delta b^2)\right)(1-\sin\gamma -\cos\gamma). \\ 
		\end{aligned}
		\end{equation}
		\noindent
		Based on Eqs.~(\ref{eq:diff_resp})~and~(\ref{eq:sec_der}) we can represent the equivariance gap $\Delta_\mathcal{F}$ as 
		\begin{equation}
		\Delta_\mathcal{F} \df |\epsilon_a + \epsilon_b| \leq |\epsilon_a| + |\epsilon_b| \leq  \frac{\epsilon_\text{def}}{2} + \frac{\epsilon_\text{def}}{2} = \epsilon_\text{def} \;.
		\label{eq:cond_diff}
		\end{equation}
		\noindent
		Then, we can express the bounds on the distances between the adjacent grid nodes in horizontal and vertical directions as
		\begin{equation}
		\begin{aligned}
		\Delta a \leq \sqrt{\frac{\epsilon_\text{def}}{2} \left|\left(\partial^2_a f + o(1)\right)(1-\sin\gamma -\cos\gamma)\right|^{-1}}\;, \\
		\Delta b \leq \sqrt{\frac{\epsilon_\text{def}}{2} \left|\left(\partial^2_b f + o(1)\right)(1-\sin\gamma -\cos\gamma)\right|^{-1}}\;. \\
		\end{aligned}
		\label{eq:response_pattern_r}
		\end{equation} 
		
		Therefore, for the image resolution defined by $\Delta a$, $\Delta b$ from Eq.~(\ref{eq:response_pattern_r}) the equivariance gap $\Delta \mathcal{F}$ is lower or equal than $\epsilon_\text{def}$.	
	\end{proof}
	
	
	To sum up, we have shown that our polynomial filters are quasi-equivariant to random image rotations and translations. Additionally, as shown in Section~\ref{sec:90deg} our polynomial filter $\mathcal{F}$ is equivariant to reflection transformation, which is an even stronger property than quasi-equivariance. This, altogether, proves that our polynomial filters are quasi-equivariant to any isometric transformation as the latter can be represented as the combination of rotation, translation and reflection.
	
	
	\section{Experiments}
	\label{s:exp}
	
	In this section we analyze the results and compare our network to the state-of-the-art transformation-invariant classification algorithms. We first describe the experimental settings. We then analyze our architecture and the influence of the different design parameters. Further, we show that equivariance gap $\Delta_\mF$ is lower for higher resolution images which confirms our theoretical results. Finally we compare our network to the state-of-the-art transformation-invariant classification algorithms.

	\begin{table*}[!ht]
		\centering
		\begin{tabularx}{\linewidth}{Xl}
			\toprule
			Method & Architecture \\
			\midrule
			{\bf Experiments on MNIST-012} & \\
			$\quad$ ConvNet~\cite{bb:lecun} & C[3]-P[2]-C[6]-P[2]-FC[50]-FC[30]-FC[10] \\
			$\quad$ STN~\cite{bb:STN} & C[3]-ST[6]-C[6]-ST[6]-FC[50]-FC[30]-FC[10] \\
			$\quad$ TIGraNet & SC[3, 3]-DP[300]-SC[6, 3]-DP[100]-S[10]-FC[50]-FC[30]-FC[10] \\
			\midrule
			{\bf Other experiments} & \\
			$\quad$ ConvNet~\cite{bb:lecun} & C[10]-P[2]-C[20]-P[2]-FC[500]-FC[300]-FC[100] \\
			$\quad$ STN~\cite{bb:STN} & C[10]-ST[6]-C[20]-ST[6]-FC[500]-FC[300]-FC[100] \\
			$\quad$ DeepScat~\cite{bb:oyallon2015deep} & W[2, 5]-PCA[20] \\
			$\quad$ HarmNet~\cite{bb:harm}  & HRC[1, 10]-HCN[10]-HRC[10, 10]-HRC[10, 20]-HCN[20]-HRC[20, 20] \\
			$\quad$ TIGraNet & SC[10, 4]-DP[600]-SC[20, 4]-DP[300]-S[12]-FC[500]-FC[300]-FC[100] \\
			\bottomrule 
			\\
		\end{tabularx}
		\caption{Architectures used for the experiments. 
		}
		\label{tab:arch_part2}
	\end{table*}
	
	\subsection{Experimental settings}
	\label{s:init}
	
	The initialization of the system may have some influence on the actual values of the parameters after training. We have chosen to initialize the parameters $\alpha_{i,m}^l$~(Eq. \ref{eq:pol_filt_1}) of our spectral convolutional filters so that the different filters uniformly cover the full spectral domain. We first create a set of $Z$ overlapping rectangular functions $z(\mu, a_i, b_i)$
	
	\begin{equation}
	z(\mu, a_i, b_i) = 
	\begin{cases} 
	1 & \mbox{if } a_i < \mu < b_i, \\ 
	0 & \mbox{otherwise.}
	\end{cases}
	\end{equation}
	
	The non-zero regions for all functions have the same size, and the set of functions covers the full spectrum of the normalized laplacian $\mathcal{L}$, i.e., $[0, 2]$. We finally approximate each of these rectangular functions by a $M$-order polynomial, which produces a set of initial coefficients $\alpha_{i,m}^l$ that are used to define the initial version of the spectral filter $\mathcal{F}_i^{l}$. Then, the initial values of the parameters $\beta$ in the spectral convolutional layer are distributed uniformly in $[0, 1]$ and those of the parameters in the fully-connected layers are selected uniformly in $[-1, 1]$. 
	
	We run experiments with different numbers of layers and parameters. For each architecture, the network is trained using back-propagation with Adam~\cite{bb:adam} optimization. The exact formulas of the partial derivatives are provided in the supplementary material. 
	
	Our architecture has been trained and tested on different datasets, namely:
	\begin{itemize}[topsep=0pt, partopsep=0pt]
		\setlength\itemsep{0pt}
		\setlength{\parskip}{1pt}
		\item \textbf{MNIST-012.} 
		This is a small subset of the MNIST dataset \cite{bb:lecun-mnisthandwrittendigit-2010}. It includes 500 training, 100 validation and 100 test images selected randomly from the MNIST images
		with labels `0', `1' and `2'. This small dataset permits studying the behavior of our network in detail and to analyze the influence of each of the layers on the performance.
		\item
		\textbf{Rotated and translated MNIST.} 
		To test the invariance to rotation and translation of the objects in an image
		we create MNIST-rot and MNIST-trans datasets respectively.
		Both of these datasets contain 50k training, 3k validation and $\sim$9k test images. We use all MNIST digits \cite{bb:lecun-mnisthandwrittendigit-2010} except `9' as it is rotated version resembles `6'.
		In order to be able to apply transformation to the digits, we resize the MNIST-rot to the size $26 \times 26$ and MNIST-trans to the $34 \times 34$.
		The training and validation data of these datasets contain images of digits without any transformation. However, the testing set of  MNIST-rot contains randomly rotated digits by angles in range $[0\degree, 360\degree]$, while the testing set of MNIST-trans comprises randomly translated MNIST examples up to $\pm 6$ pixels in both vertical and horizontal directions.
		\item
		\textbf{ETH-80.} 
		This dataset~\cite{bb:ETH80} contains images of $80$ objects that belong to $8$ classes. Each object is represented by $41$ images captured from different viewpoints located on a hemisphere (see Fig.~\ref{fig:eth80}). The dataset shows a real life example where isometric transformation invariant features are useful for the object classification. We resize the images to $[50 \times 50]$ and randomly select $2300$ and $300$ of them as the training and validation sets and we use the rest of the images for testing.
	\end{itemize}
	\noindent
	For all these datasets, we define $G$ as a grid graph where each node corresponds to a pixel location and is connected with 8 its nearest neighbors with a weight that is equal to $1$. The pixel luminance values finally define the signal $y$ on the graph $G$ for each image.

	\begin{figure}
		\centering
		\begin{tabular}{cccccc}
			\vspace{-0.3cm} 	
			\hspace{-0.5cm} 	
			\includegraphics[width=0.22\linewidth]{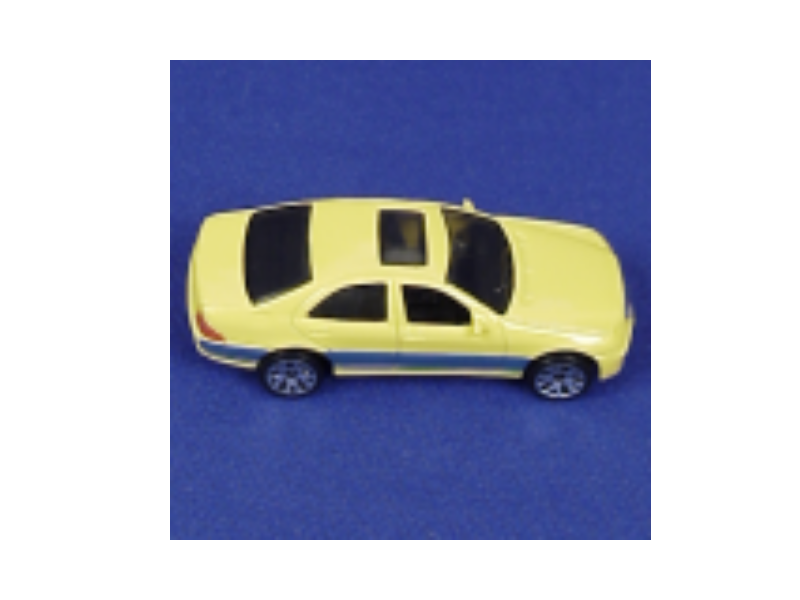} &
			\hspace{-1cm} 	
			\includegraphics[width=0.22\linewidth]{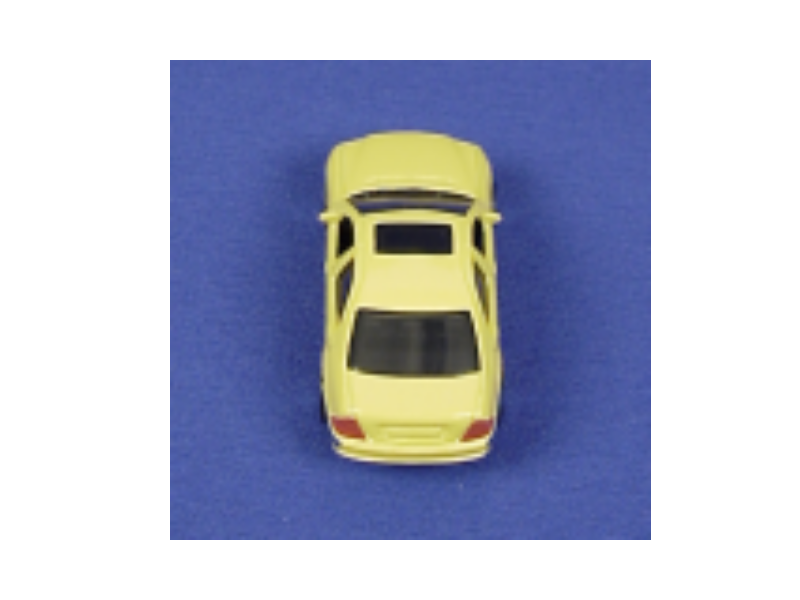} &
			\hspace{-1cm} 
			\includegraphics[width=0.22\linewidth]{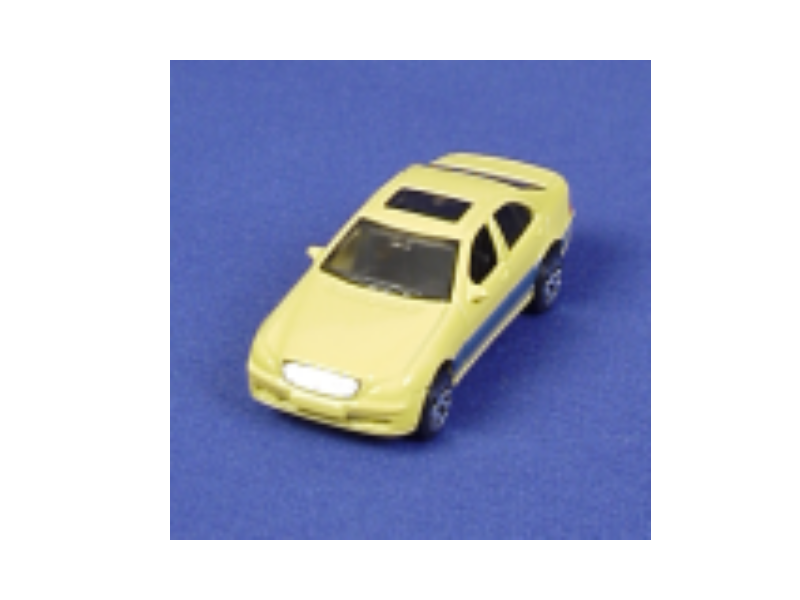} &
			\hspace{-1cm} 
			\includegraphics[width=0.22\linewidth]{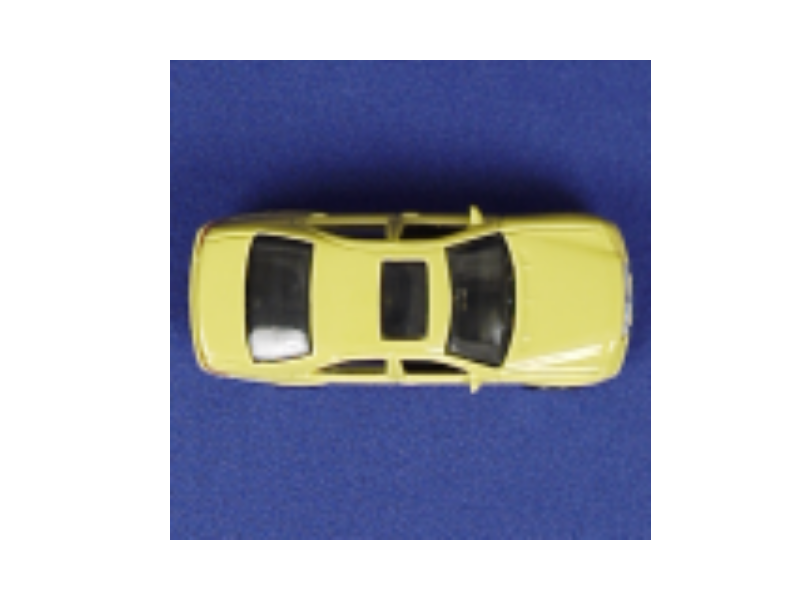} &
			\hspace{-1cm} 	
			\includegraphics[width=0.22\linewidth]{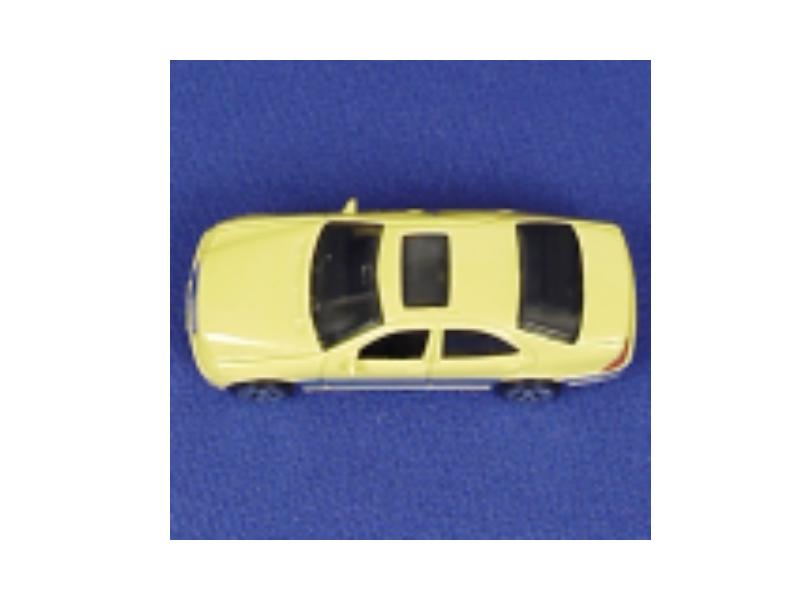} &
			\hspace{-1cm} 
			\includegraphics[width=0.22\linewidth]{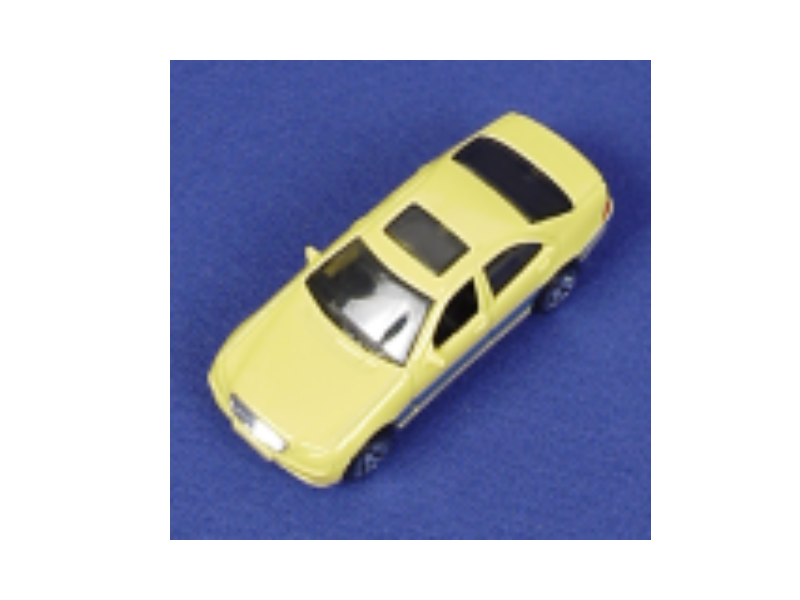} \\
			\vspace{-0.3cm} 	
			\hspace{-0.5cm} 	
			\includegraphics[width=0.22\linewidth]{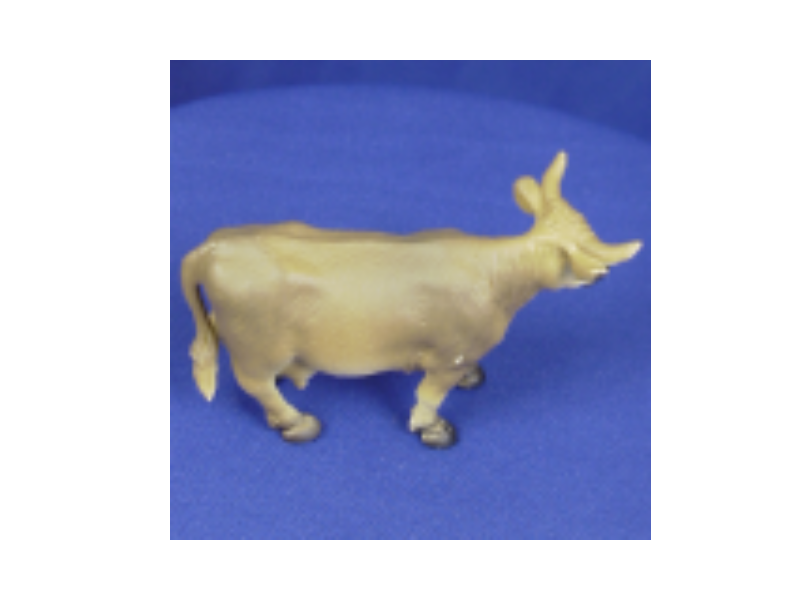} &
			\hspace{-1.0cm} 	
			\includegraphics[width=0.22\linewidth]{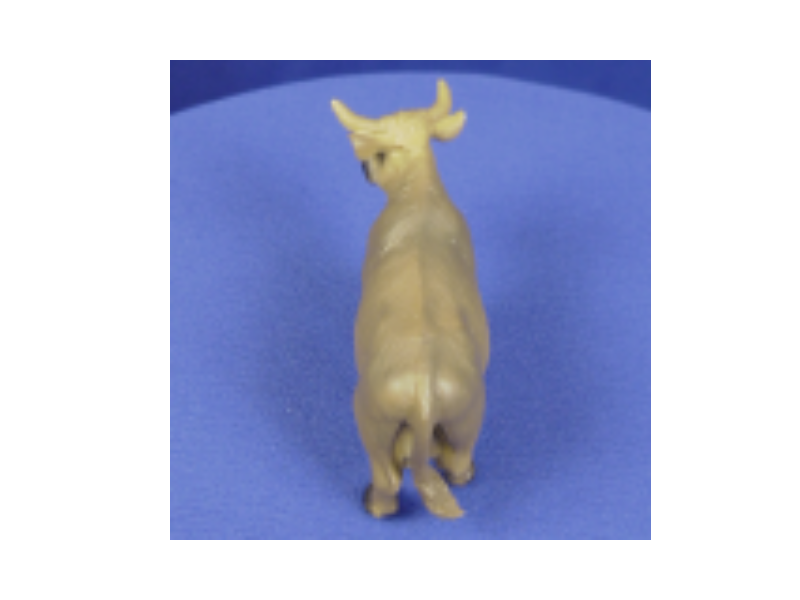} &
			\hspace{-1.0cm} 		
			\includegraphics[width=0.22\linewidth]{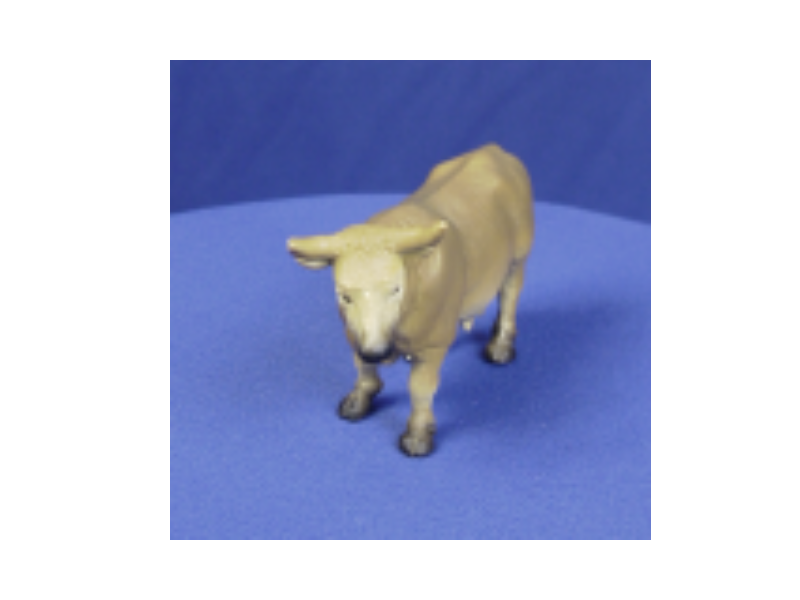} &
			\hspace{-1.0cm} 
			\includegraphics[width=0.22\linewidth]{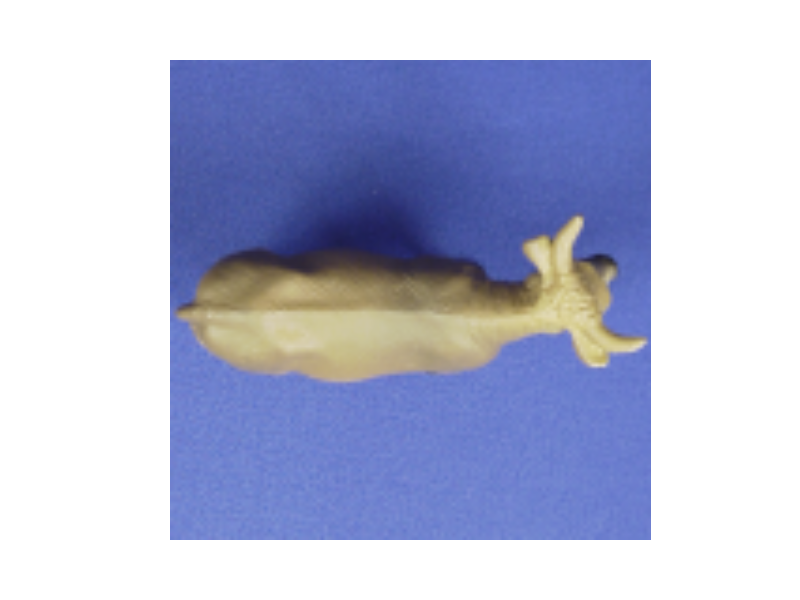} &
			\hspace{-1.0cm} 
			\includegraphics[width=0.22\linewidth]{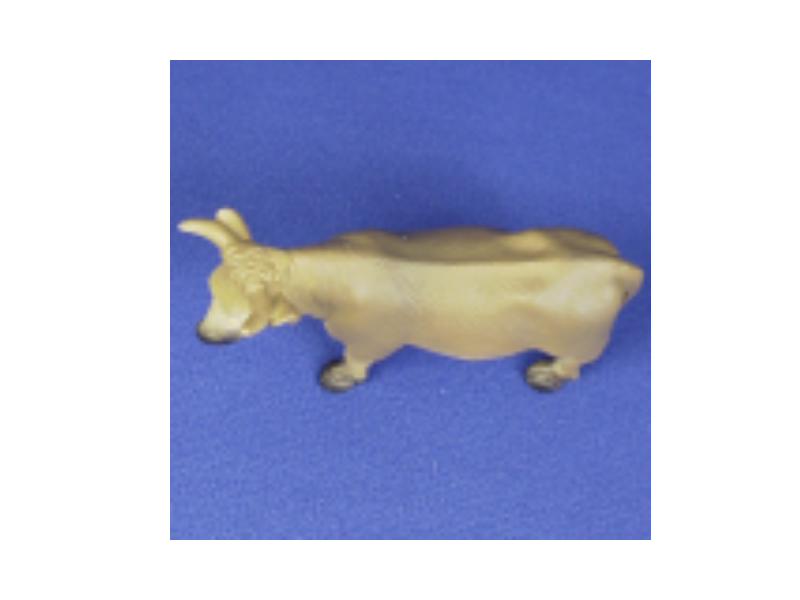} &
			\hspace{-1.0cm} 
			\includegraphics[width=0.22\linewidth]{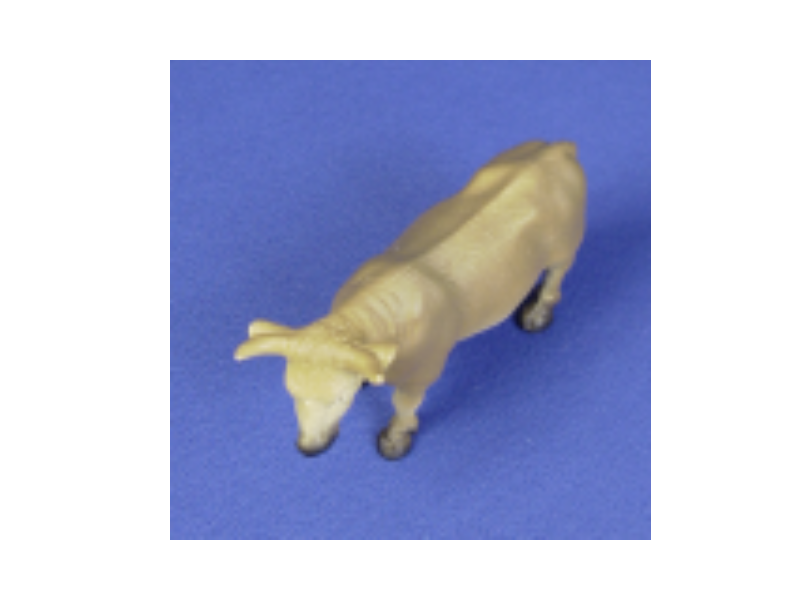} \\
			\vspace{-0.3cm} 	
			\hspace{-0.5cm} 	
			\includegraphics[width=0.22\linewidth]{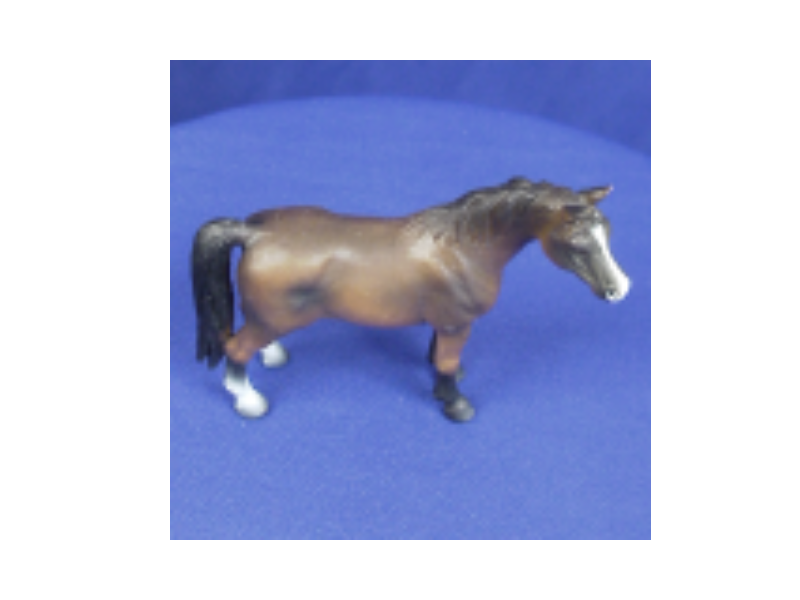}  &
			\hspace{-1.0cm} 	
			\includegraphics[width=0.22\linewidth]{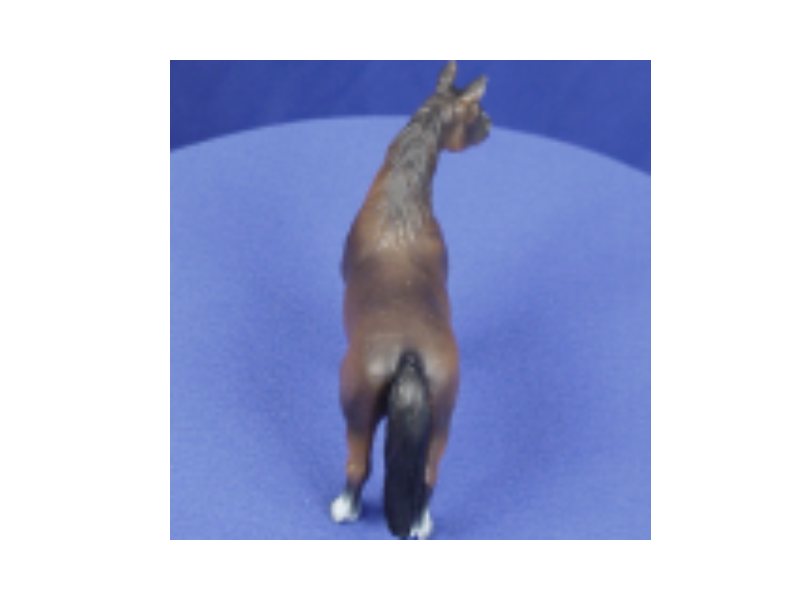} &
			\hspace{-1.0cm} 		
			\includegraphics[width=0.22\linewidth]{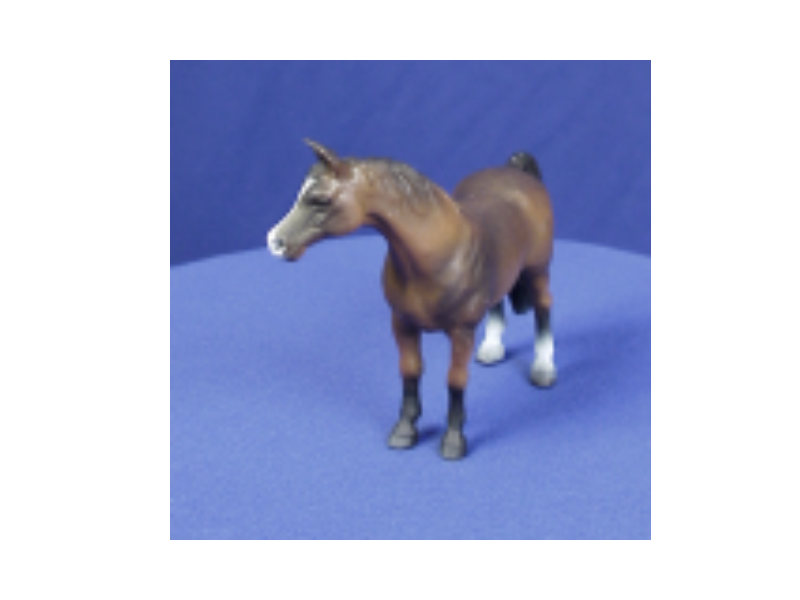} &
			\hspace{-1.0cm} 
			\includegraphics[width=0.22\linewidth]{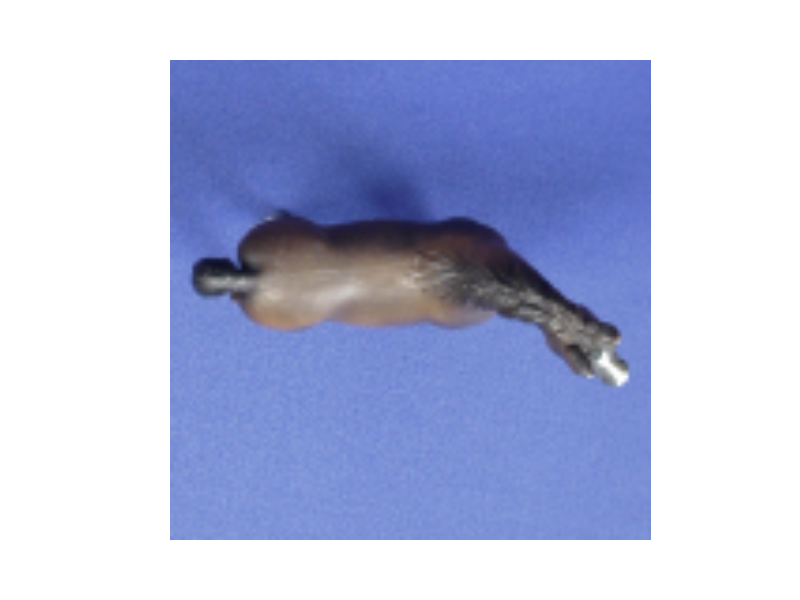} &
			\hspace{-1.0cm} 
			\includegraphics[width=0.22\linewidth]{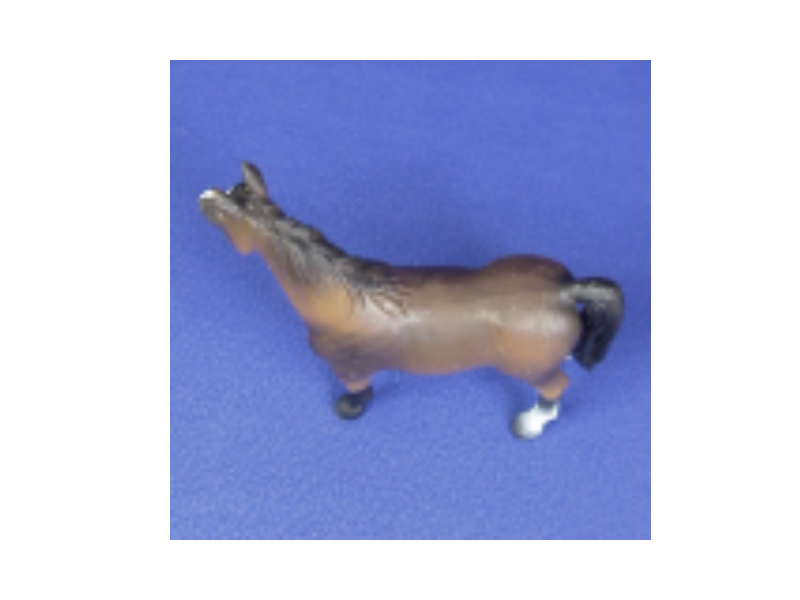} &
			\hspace{-1.0cm} 
			\includegraphics[width=0.22\linewidth]{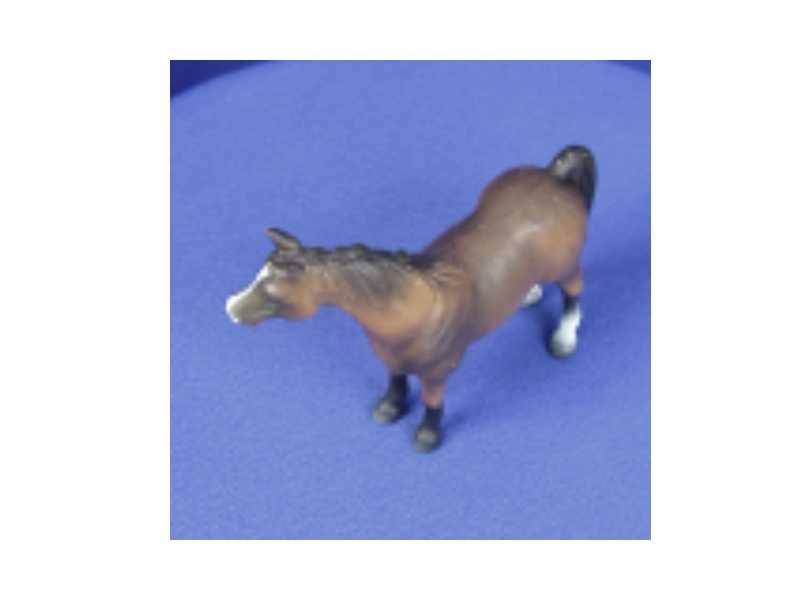} \\
			\vspace{-0.3cm} 	
			\hspace{-0.5cm} 	
			\includegraphics[width=0.22\linewidth]{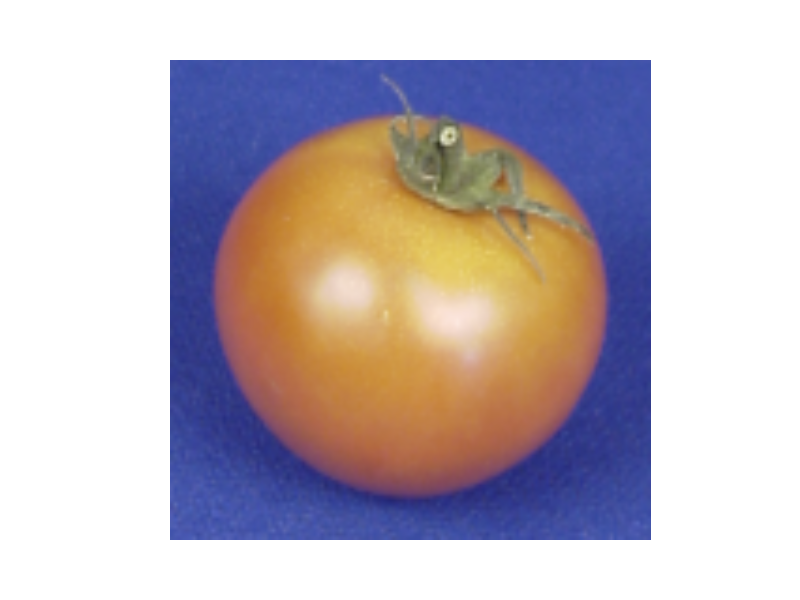}  &
			\hspace{-1.0cm} 
			\includegraphics[width=0.22\linewidth]{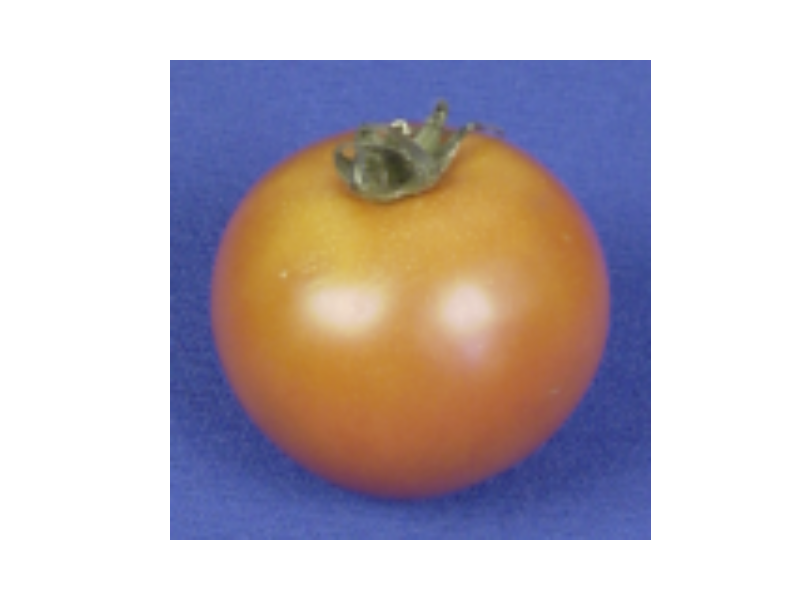} &
			\hspace{-1.0cm} 			
			\includegraphics[width=0.22\linewidth]{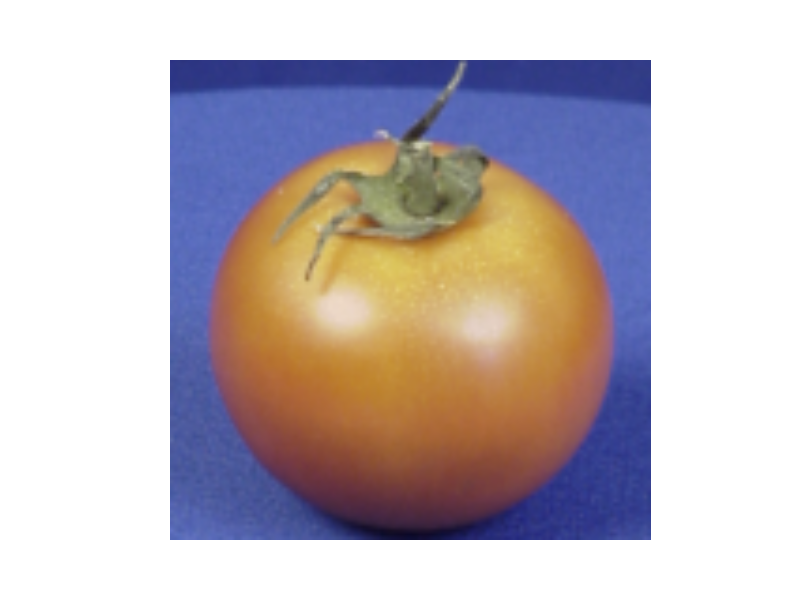} &
			\hspace{-1.0cm} 
			\includegraphics[width=0.22\linewidth]{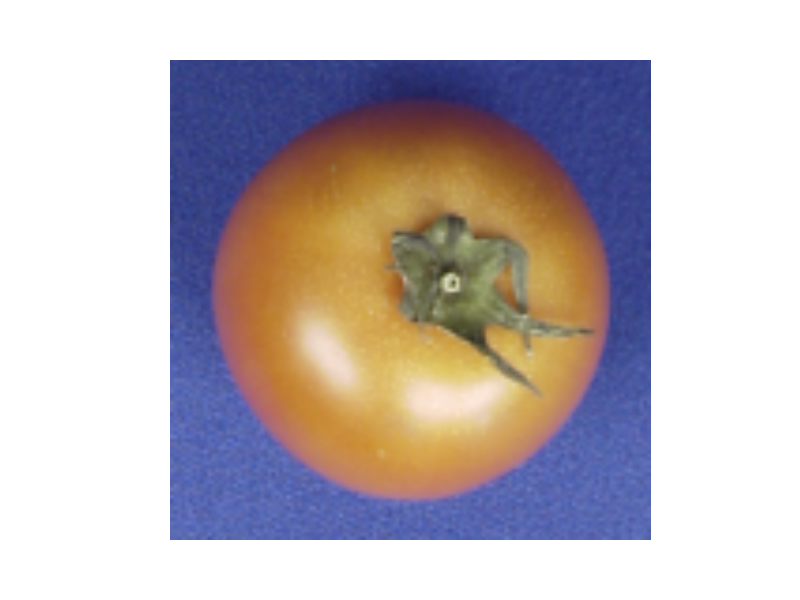} &
			\hspace{-1.0cm} 
			\includegraphics[width=0.22\linewidth]{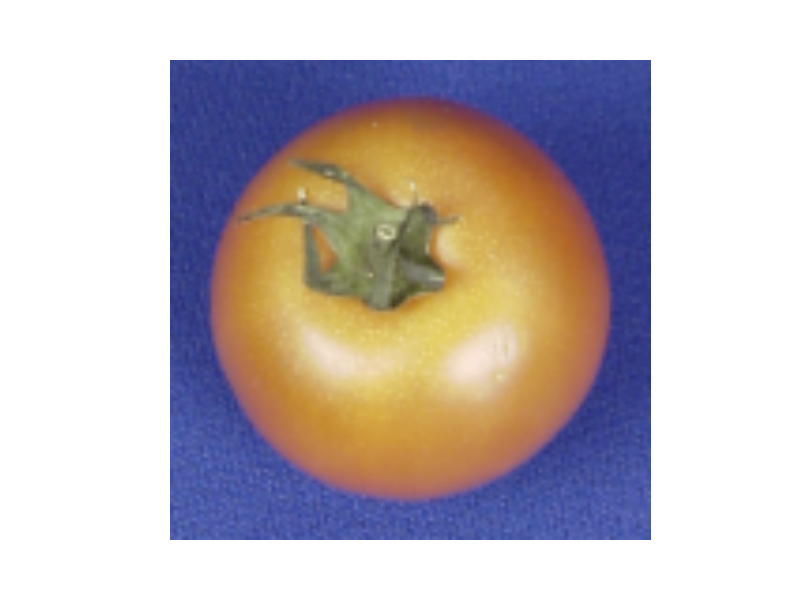} &
			\hspace{-1.0cm} 
			\includegraphics[width=0.22\linewidth]{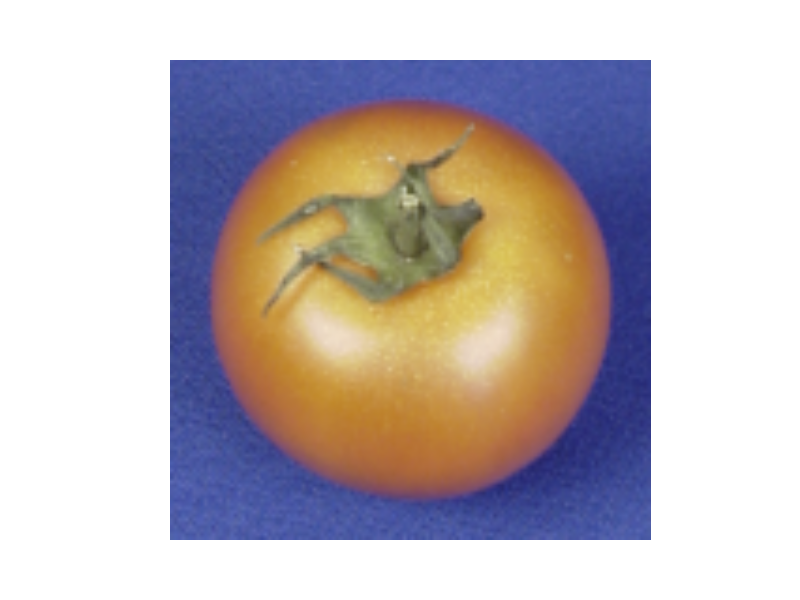} \\
			\hspace{+0.05cm} 
		\end{tabular}
		\caption{Sample images from ETH-80 dataset.}
		\label{fig:eth80}
	\end{figure}
	
	\subsection{TIGraNet Analysis}
	
	We analyze the performance of our new architecture on the MNIST-012 dataset. We first give some examples of feature maps that are produced by our network. We then illustrate the spectral kernels learned by our system, and discuss the influence of dynamic pooling operator.
	
	We first confirm the transformation invariant properties of our architecture. Even though our classifier is trained on images without any transformations, it is able to correctly classify rotated images in the test set, since our spectral convolutional layer learns filters that are equivariant to isometric transformations. We illustrate this in Fig.~\ref{fig:VFM}, which depicts several examples of feature maps $y_i^2$ from the second spectral convolutional layer for randomly rotated input digits in the test set. Each row of Fig.~\ref{fig:VFM} corresponds to images of a different digit, and we see that the corresponding feature maps are very close to each other (up to the image rotation) even when the rotation angle is quite large. This confirms that our architecture is able to learn features that are preserved with rotation, even if the training has been performed on non-transformed images. Despite important similarities in feature maps of rotated digits, one may however observe some slightly different values for the intensity. This can be explained by the fact that rotated versions of the input images may differ a bit from the original images due to interpolation artifacts.
	
	\begin{figure}
		\centering
		\includegraphics[width=0.9\linewidth]{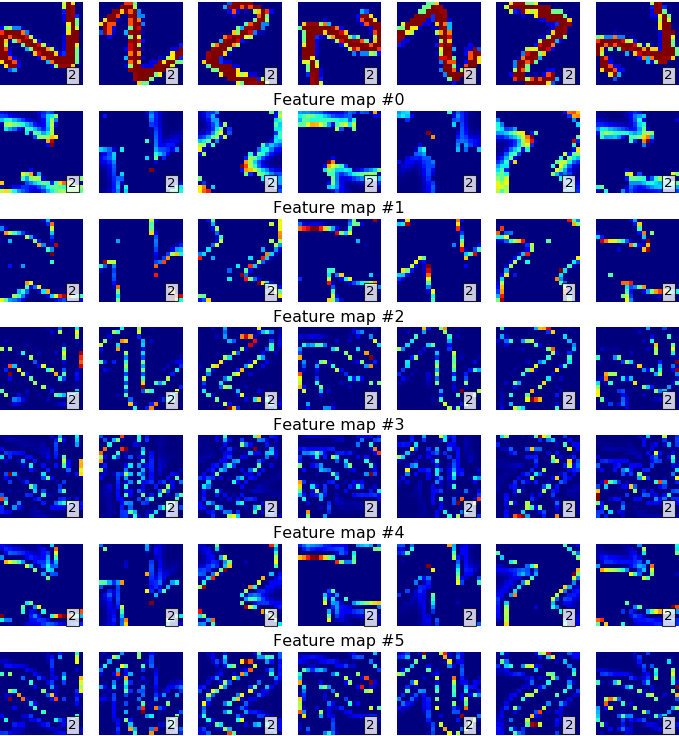} 
		\caption{{\bf Feature maps} from the second spectral convolutional layer for test images that are rotated versions of an image of the digit `2'. The predicted label for each of the images is further shown in the right bottom corner of each image.}
		\label{fig:example_fm}
		\label{fig:VFM}
	\end{figure}

	Fig.~\ref{fig:filter_ex} then shows the spectral representation of the kernels learned for the first two spectral convolutional layers of our network. As expected, the network learns filters that are quite different from each other in the spectral domain but that altogether cover the full spectrum. They permit to efficiently combine information in the different bands of frequency in the spectral representation of the input signal. Generally, the filters in the upper spectral convolutional layers are more diverse and represent more complicated features than those for the lower ones. 
	
	\begin{figure}[t!]
		\begin{tabular}{cc}
			\includegraphics[width=0.45\linewidth]{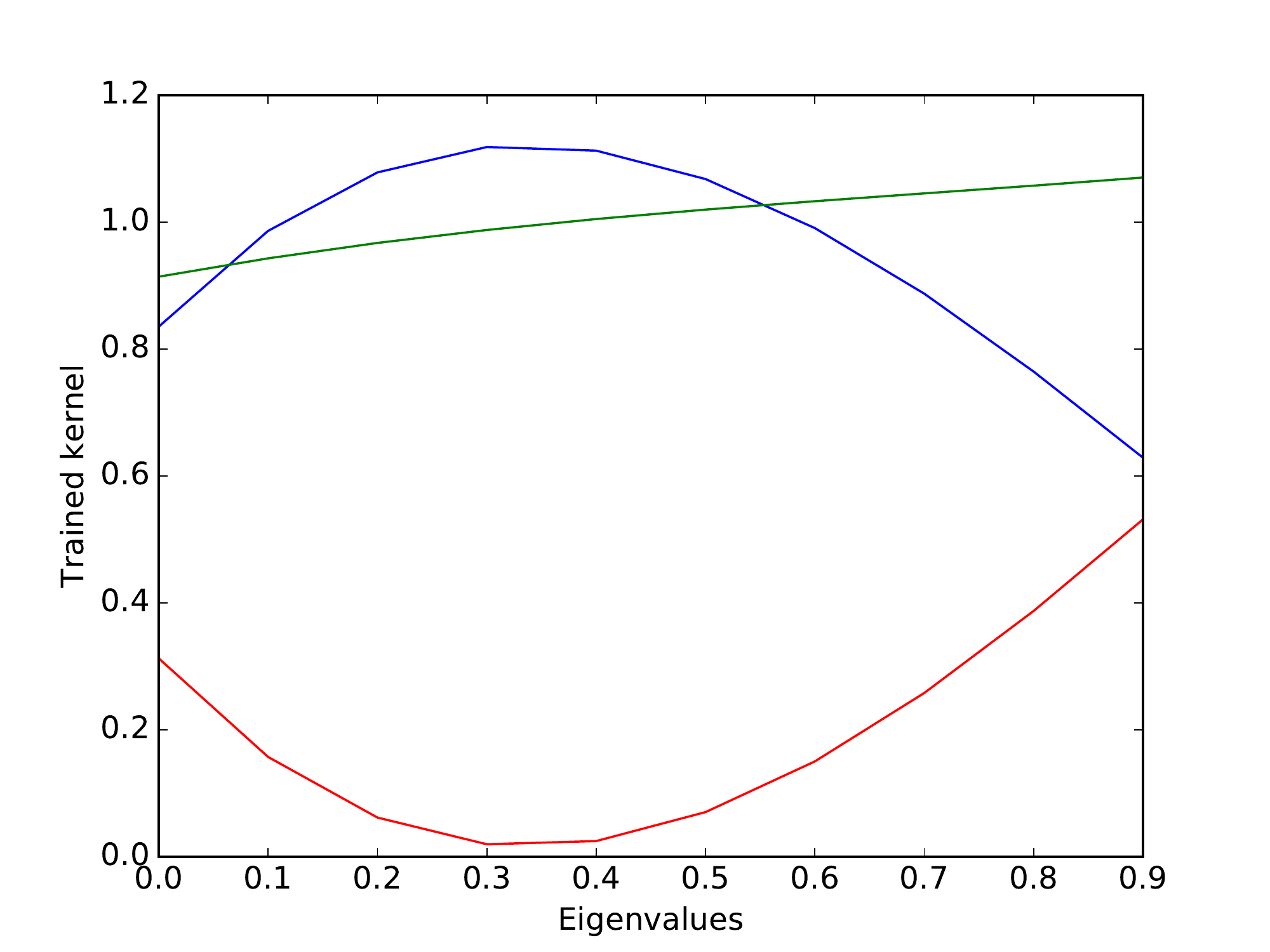} &
			\includegraphics[width=0.45\linewidth]{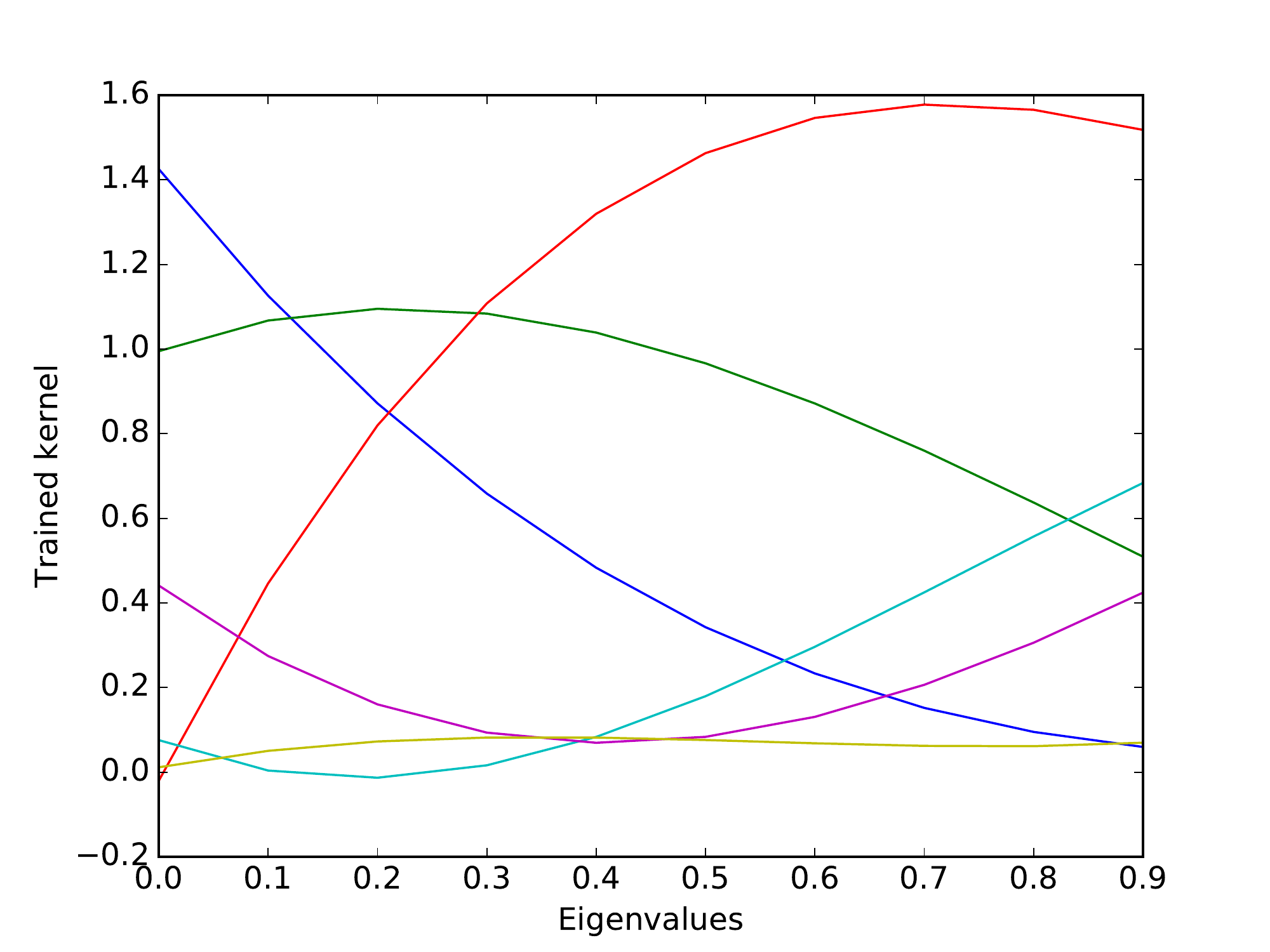}  \\
			a) & b) \\
		\end{tabular}
		\caption{{\bf Sample trained filters} in the spectral domain for (a) first and (b) second convolutional layers. Different colors represent different filters on each of the layers.} 
		\label{fig:filter_ex}
	\end{figure}
	\begin{figure}[t!]
		\centering
		\includegraphics[width=0.8\linewidth]{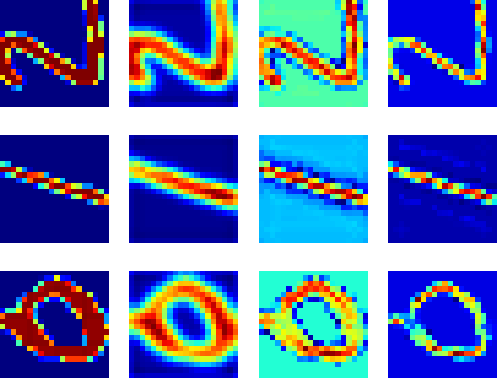}
		\caption{{\bf Feature maps after pooling} Each row shows different digits. The left most column depicts the original images, while the other columns show the features maps after dynamic pooling at the first, second and third layers respectively. The degree of the polynomial filters has been set to $M=3$ for each layer in this experiment.}
		\label{fig:feature_maps_examples}
	\end{figure}

	Further, we look at the influence of the new dynamic pooling layers in our architecture. Recall that dynamic pooling is used to reduce the network complexity and to focus on the representative parts of the input signal. Fig.~\ref{fig:feature_maps_examples} depicts the intermediate feature maps of the network for sample test images. We can see that after each pooling operation the signal is getting more and more sparse, while structure of the data that is important for discriminating images in different classes is preserved. That shows that our dynamic pooling operator is able to retain the important information in the feature maps constructed by the spectral convolutional layers. 

	\subsection{Influence of the image resolution on the quasi-equivariance}
	
	As we show in Section~\ref{s:discussion} the difference between filter responses of the polynomial filters $\Delta_\mathcal{F}$, defined in Eqs.~(\ref{eq:filt_diff_rot}) and~(\ref{eq:filt_diff}) depends on the resolution of the image $\Delta a, \Delta b$. To verify our theoretical result we run the following experiment. 
	
	We down-sample $400$ high resolution images from~\cite{bb:Agustsson_2017_CVPR_Workshops} using bicubic interpolation with several down-sampling factors $t\in\{2,3,4,5,6\}$. Thus, we obtain a set of images $y_t, t\in\{2,3,4,5,6\}$ with different resolutions. 
	
	For each of these images,  we apply isometric transformation $\mathcal{T}(y_t)$ and filters $\mF_i, i =1,...,20$ of degree $4$, with random coefficients $\alpha_i,m \in [-1, 1]$. Then, we calculate the following differences 
		\begin{equation}
		\mathbb{E}[\Delta_{\mF}](t) = \frac{1}{20} \sum_{i=1}^{20} \mathcal{T}^{-1}\big(\mF_i\big(\mathcal{T}(y_t)\big)\big) - \mF_i(y_t),
		\label{eq:dist}
		\end{equation}
	where $\mathcal{T}^{-1}$ is inverse transformation. In our experiments we apply the following transformations: 
	\begin{itemize}
	\item rotation by $\pi/18, \pi/9, \pi/6, \pi/4$;
	\item translation by $(0.1, 0.1),(0.2, 0.2),(0.3, 0.3),(0.4, 0.4)$ pixels;
	\end{itemize}
	and evaluate the mean equivariance gap $\mathbb{E}[\Delta_{\mF}](t)$ across various transformations and images. Fig.~\ref{fig:rotationresult} shows that $\mathbb{E}[\Delta_{\mF}](t)$ is growing with the increase of the  down-sampling factor~$t$, or equivalently the decrease in resolution of the images.  Also, we can see that translation gives smaller values of  $\mathbb{E}[\Delta_{\mF}](t), \forall t$ rather than rotation transformation, which confirms our results from Theorem~\ref{t:im_rot} and \ref{t:im_trans}, where it was shown that the upper bound on the equivariance gap depends on the maximum value of the second and third derivatives of the image signal $y$ for rotation and translation transformations respectively.
	  
	\begin{figure}[t!]
	\centering
	\begin{tabular}{rc}
		\rotatebox{90}{\scriptsize{Mean equivariance gap, $\mathbb{E}[\Delta_{\mF}](t)$}} & 
	\hspace{-0.3cm} \raisebox{-0.3cm}{\includegraphics[width=0.9\linewidth]{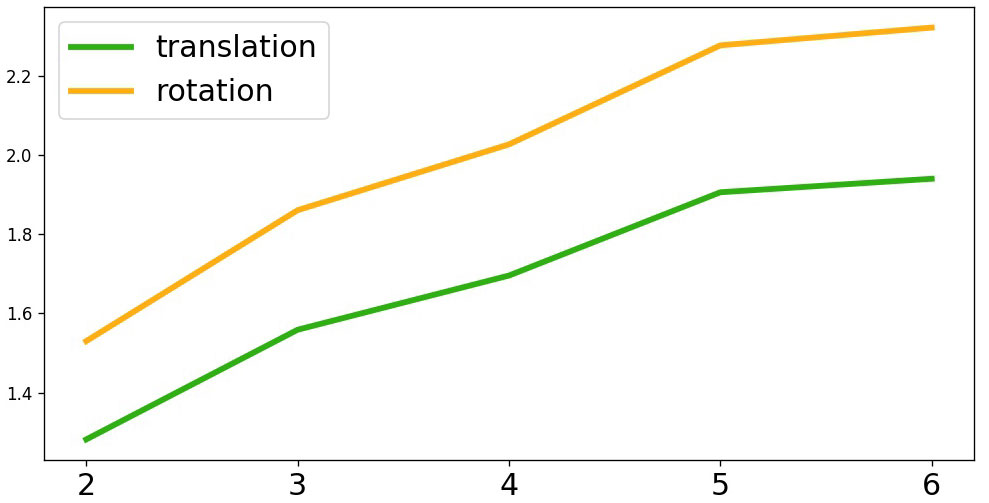}} \\
	& \scriptsize{Image down-sampling factor, $t$}\\
	\end{tabular}
	\caption{{\bf Influence of the image resolution on the quasi-equivariance property.}
		The $x$-axis illustrates the change in the image resolution that is defined by the down-sampling factor $t$ of the original image. The $y$-axis shows the mean equivariance gap $\mathbb{E}[\Delta_{\mF}](t)$ computed across a set of images with different transformations. The green and orange lines correspond to the set of image translations and rotations respectively. (best seen in color)}
	\label{fig:rotationresult}
\end{figure}

	
	\subsection{Performance evaluation}
	
	Here, we compare TIGraNet to state-of-the art algorithms for transformation-invariant image classification tasks, i.e., ConvNet~\cite{bb:lecun}, Spatial Transformer Network (STN)~\cite{bb:STN}, Deep Scattering (DeepScat)~\cite{bb:oyallon2015deep} and Harmonic Networks (HarmNet)~\cite{bb:harm}. Briefly, ConvNet is a classical convolutional deep network that is invariant to small image translations. STN compensates for image transformations by learning the affine transformation matrix. Further, DeepScat uses filters based on rich wavelet representation to achieve transformation invariance; however, it does not contain any parameters for the convolutional layers. Finally, HarmNet trains complex valued filters that are equivariant to signal rotations. For the sake of fairness in our comparisons, we use versions of these architectures that have roughly the same number of parameters, which means that each of the approaches learns features with a comparable complexity. For the DeepScat we use the default architecture. For HarmNet we preserve the default network structure, keeping the same number of complex harmonic filters, as the number of spectral convolutional filters that we have in TIGraNet.
	
	We first compare the performance of our algorithm to the ones of ConvNet and STN for the small digit dataset MNIST-012. The specific architectures used in this experiments are given in Table~\ref{tab:arch_part2}, where we use the following notations to describe it: C[$X_1$], P[$X_2$], FC[$X_3$] correspond to the convolutional, pooling and fully-connected layers respectively, with $X_1$ being the number of $3 \times 3$ filters, $X_2$ -- the size of the max-pooling area and $X_3$ -- the number of hidden units. ST[$X_4$] denotes the spatial transform layer with $X_4$ affine transformation parameters. W[$O, J$] and PCA[$X_5$] denote the parameters of DeepScat network with wavelet-based filters of order $O$ and maximum scales $J$, with dimension of the affine PCA classifier $X_5$. HRC[$X_6, X_7$] depicts the harmonic cross-correlation filter operating on the $X_7$ neighborhood with $X_6$ feature maps. HCN[$X_8$] is the complex nonlinearity layer of HarmNet with $X_8$ parameters. Finally, SC[$K_l$, $M$] is a spectral convolutional layer with $K_l$ filters of degree $M$, DP[$J_l$] is a dynamic pooling that retains $J_l$ most important values. Lastly, S[$K_{max}$] is a statistical layer with $K_{max}$ the maximum order of Chebyshev polynomials.
	
	
	\begin{table}[t!]
		\centering
		\begin{tabularx}{\linewidth}{ X c c c }
			\toprule
			& \scriptsize{Training set} & \scriptsize{Validation set}  & \scriptsize{Rotated test set} \\
			\midrule
			\multicolumn{4}{l}{\scriptsize{\bf{Training set with data augmentation}}}\\
			\qquad ConvNet &  99  &  94 &  $ 78 \pm 2.1$   \\
			\qquad STN & 100  &  97 & $ 93 \pm 0.97$   \\
			\midrule
			\multicolumn{4}{l}{\scriptsize{\bf{Training set without data augmentation}}}\\
			\qquad ConvNet & 100  &  100 &  $ 55 \pm 5$   \\
			\qquad STN & 100  &  98 & $ 50 \pm 5$   \\
			\qquad  TIGraNet & 98  &  97 & \bf{ 94 $\pm$ 0.42 }   \\
			\bottomrule
			\\
		\end{tabularx}
		\caption{Classification accuracy of ConvNet, STN and TIGraNet on MNIST-012. The methods are trained without and with transformed images. We average the performance of all the methods across 10 runs with different transformations of the test data.}
		\label{tab:comparison}
	\end{table}
	\begin{table}[t!]
		\centering
		\begin{tabularx}{\linewidth}{ X c c }
			\toprule
			& \scriptsize{MNIST-rot} & \scriptsize{MNIST-trans} \\
			\midrule
			\qquad ConvNet & 44.3 & 43.5  \\
			\qquad STN &  44.5 & 67.1 \\
			\qquad TIGraNet &  \bf{83.8} & \bf{79.6} \\
			\bottomrule
			\\
		\end{tabularx}
		\caption{Evaluation of the accuracy of the ConvNet, STN and TIGraNet on the MNIST-rot and MNIST-trans datasets. All the methods are trained on sets without transformed images.} 
		\label{tab:all_digit}
	\end{table}
	
	The results of this first experiment are presented in Table.~\ref{tab:comparison}. We can see that, if we train the methods on the dataset that does not contain rotated images, and test on the rotated images of digits, our approach achieves a significant increase in performance (i.e., $86\%$), due to its inherent transformation invariant characteristics. We further run experiments where a simple augmentation of the training set is implemented with randomly rotated images. This permits increasing the performance of all algorithms, as expected, possibly at the price of more complex training. Still, due to the rotation invariant nature of its features, TIGraNet is able to achieve higher classification accuracy than all its competitors. 

	We then run experiments on the MNIST-rot and MNIST-trans datasets. Note that both of them do not contain any isometric transformation in training and validation sets, but the test set contains transformed images. For all the methods we have used the architectures defined in Table~\ref{tab:arch_part2}. Table~\ref{tab:all_digit} shows that our algorithm significantly outperforms the competitor methods on both datasets due to its transformation invariant features.
	
	To further analyze the performance of our network we illustrate several sample feature maps for the different filters of the first two spectral convolutional layers of TIGraNet in Fig.~\ref{fig:il_big}, for the MNIST-rot and MNIST-trans datasets. We can see a few examples of misclassification of our network; for example, the algorithm predicts label `5' for the digit `6'. This mostly happens due to the border artifacts; if the digit is shifted too close to the border due to an isometric transformation, then the neighborhood of some nodes may change. 
	This problem can be solved by increasing the image borders or applying filters only to the central pixel locations. 

	\begin{figure}
		\centering
		\begin{tabular}{cc}
			\toprule
			\raisebox{2.5cm}{\rotatebox{90}{first layer}} &
			\hspace{-0.2cm} 
			\includegraphics[width=0.9\linewidth]{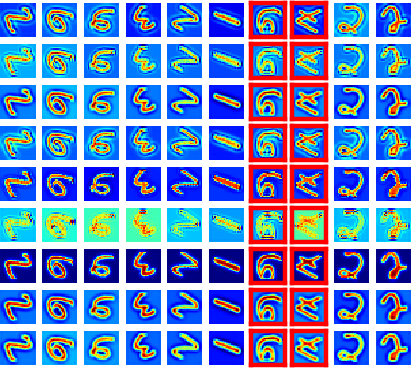} \\
			\midrule
			\raisebox{5.5cm}{\rotatebox{90}{second layer}} &
			\hspace{-0.2cm} 
			\includegraphics[width=0.9\linewidth]{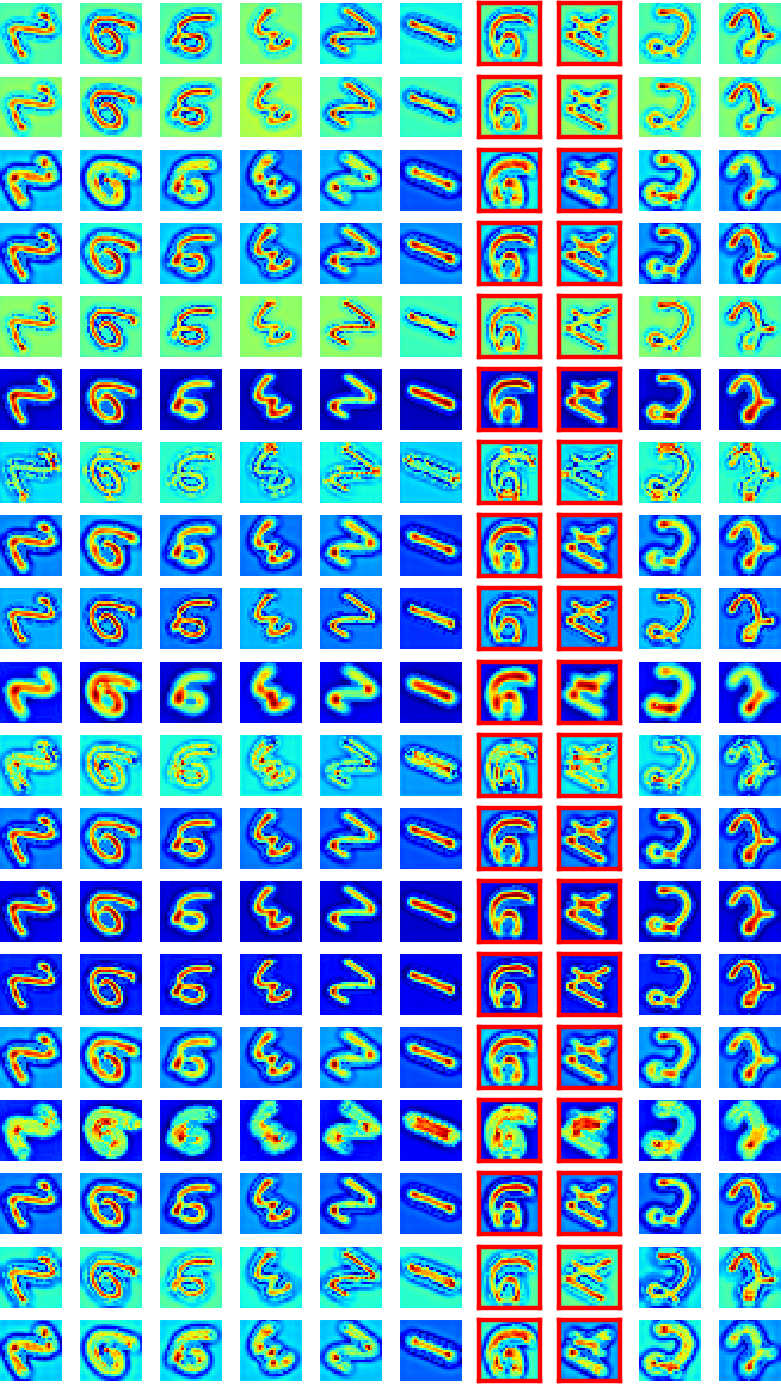} \\
			\bottomrule
		\end{tabular}
		\caption{{\bf Network feature maps visualization.} Each row shows the feature maps of different digits after the first and the second spectral convolutional layers. The misclassified images are marked by red bounding boxes. (best seen in color)
		}
		\label{fig:il_big}
	\end{figure}
	\begin{table}[!t]
		\centering
		\begin{tabularx}{\linewidth}{ X c }
			\toprule
			& Accuracy (\%) \\
			\midrule
			STN~\cite{bb:STN} &  45.1  \\
			ConvNet~\cite{bb:lecun} & 80.1   \\
			DeepScat~\cite{bb:oyallon2015deep} & 87.3 \\
			HarmNet~\cite{bb:harm} & 94.0   \\
			TIGraNet &  \bf{95.1} \\
			\bottomrule
			\\ 
		\end{tabularx}
		\caption{Performance evaluation for ConvNet, STN, DeepScat, HarmNet and TIGraNet on classification of images from the ETH-80 dataset.}
		\label{tab:res_eth}
	\end{table}
	
	Further, we evaluate the performance of our algorithm in more realistic settings where the objective is to classify images of objects that are captured from different viewpoints. This task requires having a classifier that is invariant to isometric transformations of the input signal. We run experiments on the ETH-80 dataset and compare the classification performance of TIGraNet to those of ConvNet, STN, DeepScat and HarmNet. The architectures of the different methods are described in Table~\ref{tab:arch_part2}. 
	
	Table~\ref{tab:res_eth} shows the classification results in this experiment. We can see that our approach outperforms the state-of-the-art methods due to its transformation invariant features. The closest performance is achieved by Harmonic Networks, since this architecture also learns equivariant features. It is important to note that the ETH-80 dataset contains less training examples than other publicly available datasets that are commonly used for the training of deep neural networks. This likely results in decrease of accuracy for methods such as~ConvNets and~STN. On the contrary, our method is able to achieve good accuracy even with small amounts of training data, due to its inherent invariance to isometric transformations.

	Finally, we run an additional experiment to show the influence of the amount of training data augmentation on the performance of our method. In this experiment we construct several training datasets $\mathcal{D}_i$ based on the training MNIST-rot images. To build each of these datasets $\mathcal{D}_i$ we randomly rotate its test images on one of $K_{\mathcal{D}_i}$ predefined angles. For example, $K_{\mathcal{D}_i}=4$ means that training images are randomly rotated by $0, \pi/2, \pi, 3\pi/2$ degrees. The Fig.~\ref{fig:randomrot} shows the performance of different methods on the test dataset containing MNIST-rot images rotated on random angle with respect to the number $K_{\mathcal{D}_i}$. As we can notice, our method trained on the dataset without any transformation outperforms the convolutional network even if it contains examples of many different rotations in the training data. However, the Spatial Transformer Network outperforms our method on the dataset with $K_{\mathcal{D}_i} > 4$. It happens because our network is inherently equivariant to graph isometric transformations that preserve graph structure (see Section~\ref{s:equivaricance}). Rotations on $45$ degrees, however, introduces interpolation artifacts. 
	Therefore, to be complete we also train our method on $D_i$, which allows our method to overcome these issues and again outperform other methods. 
	
	\begin{figure}[t!]
		\includegraphics[width=1\linewidth]{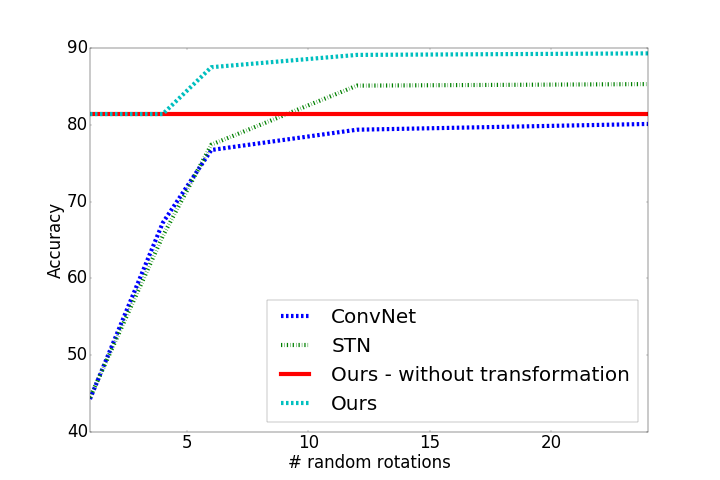}
		\caption{Performance of the different approaches depending on the number of random rotations in the training set. Red line corresponds to our method which is trained on the data without any augmentation.}
		\label{fig:randomrot}
	\end{figure}

	Overall, all the above experiments confirm the benefit of our transformation invariant classification architecture, which learns features that are invariant to transformation by construction. Classification performance improves with these features, such that the algorithm is able to reach sustained performance even if the training set is relatively small, or does not contain similar transformed images as in the test set. These are very important advantages in practice.

	\section{Conclusion}
	\label{s:conclusion}
	
	In this paper we present a new transformation invariant classification architecture, which combines the power of deep networks and graph signal processing, which allows developing filters that are equivariant to translation and rotation. A novel statistical layer further renders our full network invariant to the isometric transformations. This permits outperforming state-of-the-art algorithms on various illustrative benchmarks. Our new method is able to correctly classify rotated and translated images even if such transformed images do not appear in the training set. This confirms its high potential in practical settings where the training sets are limited but where the data is expected to present high variability.

	\bibliography{egbib_tigra}
	\bibliographystyle{IEEEtran}

\end{document}